\theoremstyle{plain}
\newtheorem{theorem}{Theorem}[section]
\newtheorem{claim}[theorem]{Claim}
\theoremstyle{definition}
\theoremstyle{remark}
\newdimen\nodeDist
\icmltitlerunning{Trained RFs Completely Reveal your Dataset}
\begin{document}

\def\nexamples{N}
\def\nattributes{M}
\def\attributes[#1]{\mathbf{x}_{#1}}
\def\classes{\mathcal{C}}
\def\class{c}
\def\oheclass[#1,#2]{z_{#2#1}}
\def\example{k}
\def\feature{i}
\def\singleattribute[#1]{f_{#1}} 
\def\classset[#1]{Z_{#1}}
\def\val{b}
\def\values{\mathcal{B}}
\def\maxval{7}
\def\attrvalue{a} 
\def\splitvalues[#1]{\mathcal{A}_{#1}}
\def\intervalvalues[#1]{\mathcal{I}_{#1}}

\def\lexicorder{\leq_L}
\def\ohevects{vects}
\def\ohegroup{w}

\def\forest{\mathcal{T}}
\def\tree{t}
\def\depth{d}
\def\node{v}

\def\internalnodes[#1]{\mathcal{V}_{#1}^{I}}
\def\leaves[#1]{\mathcal{V}_{#1}^{L}}
\def\leftchild[#1]{l(#1)}
\def\rightchild[#1]{r(#1)}
\def\nodesupport[#1,#2,#3]{n_{#1#2#3}} 
\def\nodesforfeat[#1,#2]{\mathcal{V}_{#1#2}^{I}}
\def\positivesplits[#1,#2]{\Phi_{#2}^+} 
\def\negativesplits[#1,#2]{\Phi_{#2}^-} 
\def\varlambda[#1,#2,#3]{\lambda_{#1#2#3}}
\def\vary[#1,#2,#3]{y_{#1#2#3}}
\def\varyb[#1,#2,#3,#4]{y_{#1#2#3#4}}
\def\varx[#1,#2]{x_{#1#2}}
\def\varz[#1,#2]{z_{#1#2}}
\def\varq[#1,#2,#3]{q_{#1#2#3}}
\def\varqalt[#1,#2,#3,#4]{q_{#1#2#3#4}}
\def\attrib[#1]{f_{#1}}
\def\lb[#1]{l_{#1}}
\def\ub[#1]{u_{#1}}
\def\root[#1]{r_{#1}}
\def\parent[#1]{f_{#1}}
\def\treepath[#1,#2]{P^{#1}_{#2}}
\def\fixedatt[#1]{\phi(#1)}

\def\Z{\mathbb{Z}}
\def\np{\mathcal{NP}}

\def\sklearn{\texttt{scikit-learn}}
\def\ortools{\texttt{OR-Tools}}
\def\gurobi{\texttt{Gurobi}}
\def\jul#1{\textcolor{orange}{#1}}

\def\probname{DRP}
\def\mlprobname{MLDRP}

\def\scipy{\texttt{Scipy}}
\def\draft{DRAFT}

\def\clauses{L}
\def\clause{l}
\def\nsatvars{V}

\twocolumn[
\icmltitle{Trained Random Forests Completely Reveal your Dataset}


\icmlsetsymbol{equal}{*}

\begin{icmlauthorlist}
\icmlauthor{Julien Ferry}{xxx}
\icmlauthor{Ricardo Fukasawa}{yyy}
\icmlauthor{Timothée Pascal}{zzz}
\icmlauthor{Thibaut Vidal}{xxx}
\end{icmlauthorlist}

\icmlaffiliation{xxx}{CIRRELT \& SCALE-AI Chair in Data-Driven Supply Chains, Department of Mathematics and Industrial Engineering, Polytechnique Montréal, Canada}
\icmlaffiliation{yyy}{Department of Combinatorics and Optimization, University of Waterloo, Canada}
\icmlaffiliation{zzz}{Ecole nationale des ponts et chaussées, Paris, France}

\icmlcorrespondingauthor{Thibaut Vidal}{thibaut.vidal@polymtl.ca}

\icmlkeywords{Machine Learning, ICML}

\vskip 0.3in
]



\printAffiliationsAndNotice{}  

\begin{abstract}
We introduce an optimization-based reconstruction attack capable of completely or near-completely reconstructing a dataset utilized for training a random forest. Notably, our approach relies solely on information readily available in commonly used libraries such as \sklearn{}. To achieve this, we formulate the reconstruction problem as a combinatorial problem under a maximum likelihood objective. We demonstrate that this problem is $\np$-hard, though solvable at scale using constraint programming -- an approach rooted in constraint propagation and solution-domain reduction. Through an extensive computational investigation, we demonstrate that random forests trained without bootstrap aggregation but with feature randomization are susceptible to a complete reconstruction. This holds true even with a small number of trees. Even with bootstrap aggregation, the majority of the data can also be reconstructed. These findings underscore a critical vulnerability inherent in widely adopted ensemble methods, warranting attention and mitigation. Although the potential for such reconstruction attacks has been discussed in privacy research, our study provides clear empirical evidence of their practicability.
\end{abstract}

\section{Introduction}
\label{introduction}

Machine learning (ML) techniques are increasingly used on sensitive data, such as medical records for kidney exchange~\cite{DBLP:conf/aaai/0001CDM21}, criminal records~\cite{angwin2016machine} or credit history. As this raises significant ethical and societal challenges, the use of such private data is directly regulated by several legal texts, such as the recent European Union General Data Protection Regulation\footnote[1]{\url{https://gdpr-info.eu/}} 
or the forthcoming AI Act\footnote[7]{\url{https://artificialintelligenceact.eu/}}.
Privacy has attracted significant attention during the last decades~\citep{liu2021machine} in order to protect sensitive or personal information about individual users while still being able to extract useful patterns from data. 
Moreover, privacy risks may further be exacerbated by the consideration of other ethical desiderata, e.g., when releasing a trained ML model for the sake of transparency.

In this work, we specifically study such privacy concerns in the white-box setting in which a trained random forest (RF) is publicly released. More precisely, we attempt to reconstruct the entire dataset used to train the RF by only using information available by default in widespread libraries such as \sklearn{}~\cite{scikit-learn}, namely the structure of the trees within the forest and the class cardinalities provided within each node.

While reconstruction attacks have been previously studied~\citep{doi:10.1146/annurev-statistics-060116-054123}, to the best of our knowledge, no work could consistently reconstruct an entire dataset from a trained RF. While some information can be extracted from single trees regarding the number of examples with specific combinations of features, the path taken by each individual example in each tree is unknown. Consequently, it is challenging to combine the information provided by different trees to effectively narrow down the potential datasets. To achieve this goal, we formalize the \emph{maximum-likelihood dataset reconstruction problem} and formulate it as a unified Constraint Programming (CP) model over the forest. With this, we can leverage the solution capabilities of modern CP algorithms based on constraint propagation, solution domain reduction, exploration, and backtracking. In an extensive computational campaign, we show that our methodology achieves nearly flawless recovery for RFs trained without bootstrap aggregation but with feature randomization. Even in cases where bootstrap aggregation is employed, our approach successfully recovers the majority of the data. In summary, the main contributions of this study are:

\begin{itemize}[nosep]
    \item A formalization of the \emph{maximum-likelihood dataset reconstruction problem} for random forests
    \item A proof of $\np$-hardness for this problem. This is, however, a limited safeguard since the relentless progress of generalist combinatorial optimization algorithms (i.e., based on CP or mixed-integer programming) permits solving many $\np$-hard problems at scale nowadays.
    \item The proposal of a CP formulation amenable to an efficient solution using state-of-the-art algorithms.
    \item Extensive computational experiments demonstrating how even a reasonably small number of trees reveal the quasi-totality of the datasets on standard applications. Our source code is openly accessible at \url{https://github.com/vidalt/DRAFT} in the form of a user-friendly Python module named \draft{} \emph{(Dataset Reconstruction Attack From Trained ensembles)}, under a MIT license. 

\end{itemize}

\section{Technical Background}
\label{sec:technical_background}

\paragraph{Supervised Machine Learning (ML).} Let ${\{\attributes[\example];\class_{\example}\}}^{\nexamples}_{\example=1}$ be a training set in which each example $\example$ is characterized by a vector $\attributes[\example] \in \{0,1\}^{\nattributes}$ of $\nattributes$ binary attributes and a class $\class_{\example} \in \classes$. We let $\oheclass[\class,\example]$ be a one-hot encoding of the classes, which is $1$ if $\class_{\example}=\class$, and $0$ otherwise.
Moreover, in some situations, several binary features are used to one-hot encode a single original numerical or categorical attribute. In such case, precisely one of these binary features is $1$, and the others are $0$. We let $\ohevects$ be the list of the different groups (if any) of binary attributes one-hot encoding the same original feature.

\paragraph{Random Forests (RFs).} The training dataset is used to build a random forest $\forest$ in which each tree $\tree \in \forest$ is made of a set of internal nodes $\internalnodes[\tree]$ and a set of leaves $\leaves[\tree]$. 
Each internal node $\node \in \internalnodes[\tree]$ corresponds to a binary condition over the value of a given attribute. If the condition is satisfied, the example being classified descends towards the left child $\leftchild[\node]$ of the node, otherwise it descends towards its right child $\rightchild[\node]$. Once the example reaches a leaf $\node \in \leaves[\tree]$ (terminal node), it is classified according to the class associated with this leaf. Such class corresponds to the majority class among the training examples captured by the leaf. To compute it (and eventually assign class probabilities), each leaf contains the per-class number of training examples it captures. In popular ML libraries such as \texttt{scikit-learn}, such counts are also provided in the internal nodes, as shown in Figure~\ref{fig:example_toy_dt}. Then, for every node $\node \in \internalnodes[\tree] \bigcup \leaves[\tree]$, let $\nodesupport[\tree,\node,\class]$ denote the number of training examples of class~$\class$ that went through $\node$.

\paragraph{Training RFs.} To encourage diversity between the different trees within an RF, several randomization mechanisms are used during training. For instance, when building each individual tree, only a random subset of the $\nattributes$ features is considered to determine the best split at each node. Note that this mechanism is used in all our experiments, although we do not explicitly leverage it. Bootstrap aggregation (\emph{bagging}) is another popular and successful mechanism in RF training \citep{10.5555/2381019}. It consists in building $\lvert \forest \rvert$ separate training sets, one for each tree, by performing random sampling with replacement from the original training set ${\{\attributes[\example];\class_{\example}\}}^{\nexamples}_{\example=1}$. In consequence, not all examples of the original dataset are used for training each tree, while some appear multiple times. Algorithmic implementations for learning RFs are available within popular libraries such as \sklearn{}. While bagging is not mandatory, it is often used by default, as it lowers variance and enhances generalization. 
Finally, some support or size constraints are often set when training each tree. In particular, it is possible to set a maximum depth constraint ensuring that each tree has depth at most $\depth_{max}$.\\
In our framework, we leverage both the structure of the trees within the forest and the counts provided within each node to conduct a dataset reconstruction attack. We additionally take advantage of the theoretical probability distributions of the number of occurrences of each example within each tree's training set.

\paragraph{Constraint Programming (CP).} CP is a generic approach to finding feasible or optimal solutions to a wide variety of problems, including $\np$-hard ones. The basic principle is to define a set of \emph{decision variables} -- each allowed to take values within a given (discrete) domain -- and \emph{constraints} that express relationships between variables. Optionally, an \emph{objective function} may be provided to be maximized or minimized.
The types of allowed constraints depend on which specific CP solver is used, but linear and logical/implication constraints are typical examples.\\
CP solvers then combine several techniques (constraint propagation, backtracking, local search) to efficiently explore the search space and find a feasible/optimal solution. An overview of fundamental ideas/techniques in CP can be found in \citet{rossi2008constraint}. While solving CP models is theoretically $\np$-hard, state-of-the-art solvers can handle very large-scale problems in practice, and the performance of state-of-the-art solvers has dramatically increased.

\section{Related Works}
\label{sec:rel_works}

\begin{figure*}[h!t]
  \centering
  \begin{subfigure}[t]{0.48\textwidth}
    \centering\includegraphics[width=0.93\textwidth]{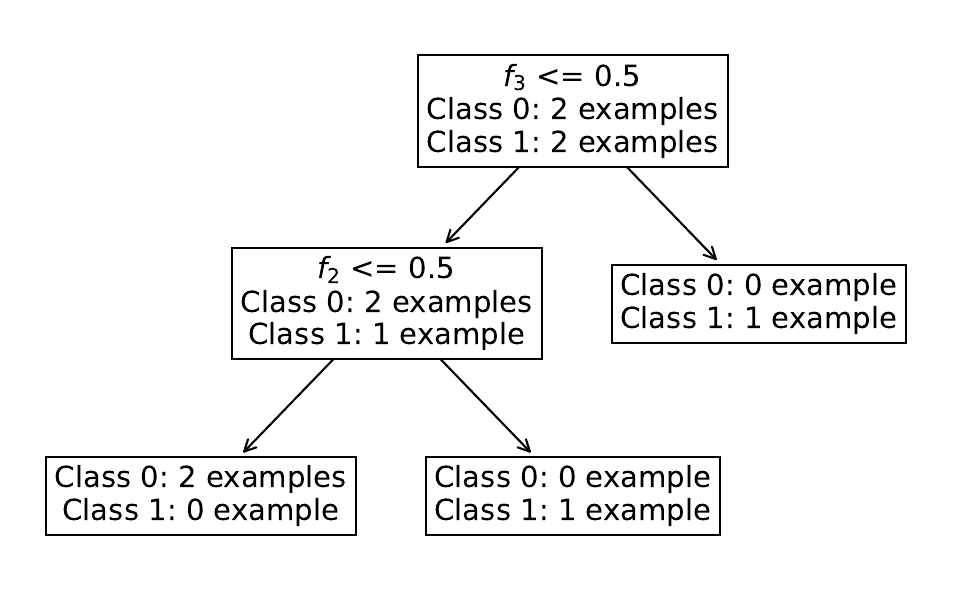}
    \caption{Tree $\tree_1$ (trained without using attribute $\singleattribute[1]$)}\label{fig:example_toy_dt_1}
  \end{subfigure}
  \begin{subfigure}[t]{0.48\textwidth}
    \centering\includegraphics[width=0.93\textwidth]{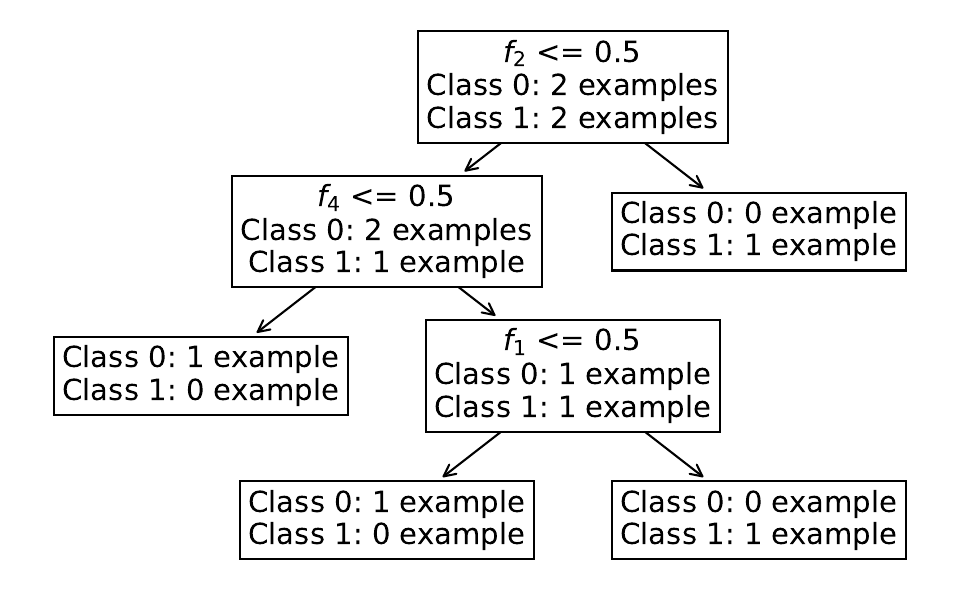}
    \caption{Tree $\tree_2$ (trained without using attribute $\singleattribute[3]$)}\label{fig:example_toy_dt_2}
  \end{subfigure}
    \caption{Example decision trees trained using \texttt{scikit-learn} on a small dataset (Table~\ref{tab:toy_dataset}).}
    \label{fig:example_toy_dt}
\end{figure*}

ML methods often exploit private data during training.
Consequently, it is crucial to ensure that their outputs -- which may be released directly or accessed through a dedicated API -- do not leak information regarding their inputs~\citep{DBLP:conf/pods/DinurN03}. 
\emph{Inference attacks} against ML models~\citep{DBLP:journals/corr/abs-2007-07646} precisely aim at exploiting the output of a learning algorithm 
to infer information regarding the training dataset.
Different attacks can be distinguished, depending on their specific objective. For instance, \emph{membership inference attacks} aim to infer whether an example was part of a model's training data or not~\citep{DBLP:conf/sp/ShokriSSS17,DBLP:conf/sp/CarliniCN0TT22}. 
In this work, we are interested in \emph{dataset reconstruction attacks}, which aim at reconstructing (entirely or partially) a model's training dataset~\citep{doi:10.1146/annurev-statistics-060116-054123}. The considered \emph{attack model} is as follows. We specifically target RF models, and consider the \emph{white-box setup}, in which the adversary has complete knowledge of the model's parameters instead of a black-box API access to it~\citep{DBLP:journals/corr/abs-2005-08679}. 
He also knows the domains of the different attributes involved in the data. Importantly, he does not intervene during the training process, but rather gets the trained model afterwards.

\textsc{Reconstruction Attack.} \emph{Given a trained RF, find a reconstructed version of its training set -- a value for each feature of each example -- that is feasible and likely w.r.t. the training process. Ideally, the reconstructed dataset should closely match the actual training data ${\{\attributes[\example];\class_{\example}\}}^{\nexamples}_{\example=1}$.} 

Reconstruction attacks are one of the most ambitious inference attacks against ML models, as they directly aim to recover entire parts of the training data. However, instead of attempting to reconstruct the whole training set, most reconstruction attacks only target retrieving part of it. For instance, the first reconstruction attacks (originally proposed against database access mechanisms) only aimed at retrieving one private binary attribute for all the database examples - assuming all other attributes were publicly known~\citep{DBLP:conf/pods/DinurN03,10.1145/1250790.1250804,doi:10.1146/annurev-statistics-060116-054123}. Some other studies only target reconstructing part of one particular example, given some public information about it~\citep{DBLP:conf/uss/FredriksonLJLPR14,DBLP:conf/ccs/FredriksonJR15}.
Other approaches require additional knowledge, such as intermediate gradients computed during collaborative~\citep{DBLP:conf/atis/PhongA0WM17} or online~\citep{DBLP:conf/uss/0001B0F020} learning, stationary points reached by gradient descent algorithms~\citep{DBLP:conf/nips/HaimVYSI22} or information regarding the model's fairness~\citep{hu2020inference,aalmoes2022dikaios,hamman2022can,ferry2023exploiting}. 

The most closely related works are those of~\citet{DBLP:conf/dbsec/GambsGH12} and~\citet{ferry:hal-04189566}. More precisely, \citet{DBLP:conf/dbsec/GambsGH12} showed that the structure of a single trained decision tree can be leveraged to build a probabilistic dataset encoding the whole set of reconstructions of the training data that are compatible with the provided tree's structure. This approach was later generalized in \citet{ferry:hal-04189566} to consider other simple interpretable models. Compared to previous studies, one of the key challenges addressed by our approach is to combine the information provided by several trees to achieve a feasible and accurate reconstruction. This is especially difficult since the number of occurrences (due to \emph{bagging}) and the path taken by each individual example in each tree is unknown. Consequently, we specifically design our method to handle the random selection of examples within each tree and formulate a maximum log-likelihood objective to guide the search.

\section{Illustrative Example}
\label{sec_illustrative_example}

\begin{table}[t]
\centering
\caption{Example binary dataset with $\nexamples = 4$ and $\nattributes = 4$.}\label{tab:toy_dataset}
\scalebox{0.85}
{
\begin{tabular}{@{}ccccc@{}}
\toprule
$\singleattribute[1]$ & $\singleattribute[2]$        & $\singleattribute[3]$ & $\singleattribute[4]$ & $\class$            \\ \midrule
0   & 0 & 0   & 1   & 0 \\
1            & 0 & 0   & 0   & 0 \\
0            & 1 & 0   & 0            & 1 \\
1   & 0          & 1   & 1   & 1 \\ \bottomrule
\end{tabular}
}
\end{table}

We first give an intuition of the reconstruction problem on a small dataset (Table~\ref{tab:toy_dataset}) with $4$ examples described by $4$ binary attributes $\singleattribute[\feature \in \{1..4\}]$ and a binary class $\class$.
Figure~\ref{fig:example_toy_dt} provides two decision trees trained on this dataset. 
Tree $\tree_1$ was trained without using $\singleattribute[1]$, while tree $\tree_2$ was trained without using $\singleattribute[3]$ (though this information is unknown to the reconstruction algorithm). For presentation simplicity, bagging is not used here, and therefore each training example is used a single time in each tree.
By following the paths from the root to each leaf within $\tree_1$ (Figure~\ref{fig:example_toy_dt_1}), one can set the value of some attributes within the reconstructed dataset by leveraging the performed splits and the per-node cardinalities. For instance, following the leftmost path, we can observe that the two examples belonging to class $0$ have value $0$ for both $\singleattribute[3]$ and $\singleattribute[2]$. Such information permits to fix some attributes' values directly. Similarly, according to $\tree_2$, there exists exactly one example of class $1$ with value $0$ for $\singleattribute[2]$, and another one with value $1$.

The main issue with such ad-hoc reasoning is that, except in some obvious cases (i.e., when \emph{all} examples of a certain class respect a given splitting condition), splits will permit quantifying \emph{how many} examples respect a certain condition without telling \emph{which} are these examples. Therefore, the biggest challenge of dataset reconstruction is to individually link the examples between the different trees and find a compatible dataset that respects all the cardinality constraints. This challenge is exacerbated by the bagging process, as in this case, the cardinalities within the trees' nodes may count some examples several times (and ignore some others).

\section{NP-Hardness Result}
\label{hardness}

In this section, we formally define the \emph{dataset reconstruction problem (\probname)} and show its $\np$-completeness. 
Part of the input data for (\probname) has already been defined before:
the set of classes $\classes$, the number $\nexamples$ of examples, the number $\nattributes$ of binary attributes, and the forest $\forest$, where each $\tree\in \forest$ is a binary tree.
Also, for each class $\class \in \classes$, tree $\tree\in \forest$ and each $\node\in \internalnodes[\tree] \cup \leaves[\tree]$, we are given an amount $\nodesupport[\tree,\node,\class]\in \Z_+$ of examples of class $\class$ that are
classified in node $\node$ of tree $\tree$.
 
In addition, we are given as input:
\begin{itemize}[nosep]
\item for every $\node\in \internalnodes[\tree]$, an attribute $\attrib[\node]\in \{1..\nattributes\}$.
\item a set $\values$ of integer values, that represent how many times a sample may appear in a tree
\end{itemize}

We assume that the data satisfies the following properties:
\begin{itemize}[nosep]
\item For each class $\class \in \classes$, tree $\tree\in \forest$ and each  $\node\in \internalnodes[\tree]$, we have that $\nodesupport[\tree,\node,\class] = \nodesupport[\tree,\leftchild[\node],\class] + \nodesupport[\tree,\rightchild[\node],\class]$.
\item For each tree $\tree\in \forest$, we have $\sum\limits_{\node\in \leaves[\tree]} \sum\limits_{\class\in \classes}  \nodesupport[\tree,\node,\class] = \nexamples$.
\end{itemize}

For each tree $\tree \in \forest$, we let $\positivesplits[\tree,\node] \subseteq \{1..\nattributes\}$ denote the set of indices of the boolean attributes that must be \textsc{True} for an example to fall into node $\node \in \leaves[\tree]\cup\internalnodes[\tree]$. Similarly, $\negativesplits[\tree,\node]\subseteq \{1..\nattributes\}$ is the set of indices of the boolean attributes that must be \textsc{False} for an example to fall into  $\node$ (hence $\positivesplits[\tree,\node] \cap \negativesplits[\tree,\node] = \emptyset$). Both represent the splits that are found along the path from the root node of tree $\tree \in \forest$ to  $\node$.
Formally, if~$v$ is the root node of the tree, then 
$\positivesplits[\tree,\node]=\negativesplits[\tree,\node]=\emptyset$. 
For every $\node\in \internalnodes[\tree]$, we can define such sets for its children as
$\positivesplits[\tree,{\leftchild[\node]}]=\positivesplits[\tree,\node]$ and $\negativesplits[\tree,{\leftchild[\node]} ]=\negativesplits[\tree,\node]\cup\{\attrib[\node]\}$ ;
$\positivesplits[\tree,{\rightchild[\node]}]=\positivesplits[\tree,\node]\cup\{\attrib[\node]\}$ and $\negativesplits[\tree,{\rightchild[\node]} ]=\negativesplits[\tree,\node]$ 
 
The goal of (\probname) is to find 
$\nexamples$ vectors $x_1,..,x_{\nexamples}\in \{0,1\}^\nattributes$, respective classifications $z_1,..,z_{\nexamples}\in \classes$ and node incidences $y_{\tree\node\example}\in \Z_+, \forall \tree \in \forest$, $\node \in  \leaves[\tree]$, $\example\in \{1..\nexamples\}$ such that:

\begin{itemize}
\item $\sum\limits_{\node \in \leaves[\tree]} y_{\tree\node\example} \in \values$, $\forall \tree\in \forest, \example\in \{1..\nexamples\} $
\item \hspace*{-0.78cm} $\sum\limits_{k\in \{1,\ldots,\nexamples\}: z_{\example}=\class } y_{\tree\node\example} = \nodesupport[\tree,\node,\class]$,
$\forall \tree \in \forest, \node \in  \leaves[\tree]$
\item For all $k\in \{1..\nexamples\}$, $\tree \in \forest$, $\node \in  \leaves[\tree]$,  if $y_{\tree\node\example}>0$, then 
$(x_k)_i=0$ for all $i\in \negativesplits[\tree,\node]$  and
$(x_k)_i=1$ for all $i\in \positivesplits[\tree,\node]$.
\end{itemize}
We note that the last constraint implies that for every $\tree \in \forest$ and $\example \in \{1..\nexamples\}$, at most one variable in the set $\{y_{\tree\node\example}: \node \in  \leaves[\tree] \}$ is nonzero. This is due to the fact that, from the way $\positivesplits[\tree,\node]$ and $\negativesplits[\tree,\node]$ are constructed, all other leaves $\node'\in \leaves[\tree]\setminus\{\node\}$ must have at least one attribute $i\in \positivesplits[\tree,\node]\cup \negativesplits[\tree,\node]$ switching its required \textsc{True}/\textsc{False} value in $v'$.

Note that if we set 
$\values=\{1\}$,
we impose that each example must appear exactly once in every tree, which corresponds to the situation when no bagging is used. If we set  
$\values=\{0,\ldots,\nexamples\}$,
we get that each example can appear any number of times in a tree, which corresponds to the situation when bagging is used. 

\begin{theorem}
The decision version of (\probname) is $\np$-complete.
\label{thm:nphard}
\end{theorem}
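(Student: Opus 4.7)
The claim has two components: membership in $\np$ and $\np$-hardness. Membership is the easy direction: a certificate consists of the attribute vectors $x_1,\ldots,x_\nexamples\in\{0,1\}^\nattributes$, the classifications $z_1,\ldots,z_\nexamples\in\classes$, and the nonnegative integers $y_{\tree\node\example}$ for every tree $\tree\in\forest$, leaf $\node\in\leaves[\tree]$ and example $\example$. All of these have polynomial size and each constraint in the statement -- the membership $\sum_{\node\in\leaves[\tree]} y_{\tree\node\example}\in\values$, the leaf-cardinality equalities, and the path-consistency conditions triggered whenever $y_{\tree\node\example}>0$ -- can be verified in polynomial time. So the first step is simply to write down this verification procedure and confirm that it runs in polynomial time in the encoding length of the instance.

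For $\np$-hardness I would reduce from a classical $\np$-complete problem such as 3-SAT. Given a 3-CNF formula $\phi$ with $\nsatvars$ variables and $\clauses$ clauses, build a (\probname) instance with $\nattributes=\nsatvars$ attributes, one per Boolean variable (possibly augmented by $O(\clauses)$ auxiliary attributes playing the role of clause-level gadgets). For each clause $C_j=\clause_{j,1}\vee \clause_{j,2}\vee \clause_{j,3}$, include a small tree $\tree_j$ that sequentially branches on the three literals; the unique leaf reached along the ``all three literals false'' path is assigned cardinality $0$ for every class, so that no example may ever be routed there. The remaining cardinalities and the choices of $\nexamples$, $\classes$ and $\values$ are fixed so that feasibility of the (\probname) instance is equivalent to satisfiability of $\phi$: in one direction, a satisfying assignment produces valid attribute vectors and routings; in the other, any feasible routing avoids every forbidden ``all-false'' leaf, and the consistency of attribute values across trees whose paths share a variable forces the vectors $x_\example$ to encode a globally satisfying assignment.

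The principal obstacle is bridging the conjunctive nature of tree paths with the disjunctive nature of SAT clauses while keeping the reduction polynomial. A naive construction with a single example of a single class fails immediately, since $\sum_{\node} y_{\tree\node 1}=1$ then forces a deterministic path through each tree and pre-determines the truth values of the three clause variables, destroying any combinatorial freedom. I would circumvent this either by introducing multiple examples whose class assignments are themselves decision variables, thereby creating routing ambiguity that mirrors the several satisfying patterns of a 3-literal clause, or by adding choice-gadget attributes that record which literal of each clause is used as its witness. Once such a gadget is in place, both directions of the reduction are direct; the bulk of the remaining work is bookkeeping -- verifying that the construction stays polynomial in $|\phi|$, that both settings $\values=\{1\}$ and $\values=\{0,\ldots,\nexamples\}$ can be handled by the same template, and that feasibility translates faithfully in each direction.
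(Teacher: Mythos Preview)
Your $\np$ membership argument is correct and matches the paper's. Your high-level hardness strategy---reduce from 3-SAT, one tree per clause branching on its three literals, with the all-literals-false leaf forbidden by a zero count---is also the paper's starting point, and you correctly diagnose the central obstacle: with a single example the leaf counts pre-determine its path, leaving no combinatorial freedom.

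The gap is that you stop at the diagnosis. You offer two possible remedies (class-assignment ambiguity, or per-clause witness attributes) but develop neither into an actual construction, and then declare the rest ``bookkeeping''. It is not: the whole difficulty is to engineer the extra examples and attributes so that (i) the seven non-forbidden leaves of every clause tree are simultaneously filled by feasible examples, while (ii) one distinguished example is forced to traverse the literal-subtree of \emph{every} clause tree, so that its attribute vector encodes a global satisfying assignment. Your first suggestion---exploiting class ambiguity---does not obviously help, since the counts $n_{\tree\node\class}$ are per class and fixed in the instance; you would still have to specify, for each clause tree, which leaves carry which class counts, and this runs into the same pre-determination problem. Your second suggestion points in the right direction but is only a label, not a mechanism. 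Note also that the most natural ``just use seven examples, one per satisfying leaf'' variant does \emph{not} yield 3-SAT: it asks that the seven examples' projections onto each clause's three variables hit all seven satisfying patterns exactly, a covering condition that is neither implied by nor implies satisfiability.

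For comparison, the paper takes $|\classes|=1$, $\values=\{1\}$, $\nattributes=\nsatvars+|\clauses|$ and $\nexamples=6|\clauses|+1$. Each clause tree $\clause$ first branches on a dedicated selector attribute $f_{\nsatvars+\clause}$: the left child is a single absorbing leaf of count $6|\clauses|-6$, and the right child is the depth-$3$ perfect tree on the clause's three literals, with its seven satisfying leaves at count $1$ and the falsifying leaf at count $0$. An additional auxiliary tree branches on $f_{\nsatvars+1},\ldots,f_{\nsatvars+|\clauses|}$ along its rightmost path, with the rightmost leaf at count $1$; this forces exactly one example to have all selectors equal to $1$, hence to enter the literal-subtree of every clause and land in a satisfying leaf of each. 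Six dummy examples per clause, each with only that clause's selector set, fill the remaining six satisfying leaves; all other dummies are absorbed by the left leaves. This selector-plus-auxiliary-tree device is the missing ingredient in your plan; without it or a concrete equivalent, the reduction is incomplete.
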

\begin{proof}
First, a {\bf YES} certificate of (\probname) is any solution $(x,z,y)$. One can verify if it is feasible in polynomial time, so the decision version of $(\probname)\in \np$. 

Next, consider an instance of the $\np$-complete problem 3-SAT, given by a set  $\clauses=\{1,\ldots,|\clauses|\}$ of clauses with three literals each. Each literal is either a variable or its complement, and there are $\nsatvars$ possible variables. 
We construct an instance of (\probname) with $|\classes|=1$, by specifying the forest~$\forest$ and $\nodesupport[\clause,\node,\class]$ values. We let $\values=\{1\}$. By assumption on the data, we only need to specify $\nodesupport[\clause,\node,\class]$ for $\node \in \leaves[\tree]$. Also, recall that the left branch of $\node \in \internalnodes[\tree]$ corresponds to setting $\attrib[\node]$ to zero, and the right branch sets it to 1.
The idea is to construct a perfect binary tree for each $\clause \in \clauses$.
This way, the fixed attributes $\positivesplits[\clause,\node]$ and $\negativesplits[\clause,\node]$
of each leaf $\node$ of tree $\clause$ will represent one of each possible assignment of the three literals of $\clause$ to 0 or 1.

Out of the eight possible leaves of tree $\clause$, seven satisfy the corresponding clause, and one does not. We set $\nodesupport[\clause,\node,\class]=1$ for the leaves that satisfy clause $\clause$ and $\nodesupport[\clause,\node,\class]=0$ otherwise. The goal of our construction is to force the DRP solver to generate an example whose attribute values lead it towards one of the leaves with $\nodesupport[\clause,\node,\class]=1$ in each tree, i.e., a 3-SAT solution, whenever such a solution exists. To achieve this, we include six additional ``dummy'' examples for each tree to reach the remaining alternatives with $\nodesupport[\clause,\node,\class]=1$.
Therefore, our constructed instance has $\nexamples=6|\clauses| + 1$ examples.

We rely on $\nattributes = \nsatvars+|\clauses|$ binary attributes. The first~$\nsatvars$ attributes will represent the 0/1 assignment of values to literals. The remaining $|\clauses|$ attributes will determine whether one example is used or not in the leaves corresponding to the perfect binary tree of that clause. With that, we add to each tree $\clause=1,\ldots,|\clauses|$, a root node where we branch on feature $\nsatvars + \clause$. The right branch of the tree will contain the perfect binary tree described above. The left branch is a single leaf node with $\nodesupport[\clause,\node,\class]=6|\clauses| - 6$, designed to absorb all dummy examples not destined toward that tree.

We finally construct one extra auxiliary tree. All its left branches end up in leaves. Each node on the right branch at depth $\depth$ (including the root node) branches on attribute $\nsatvars + \depth + 1$.
The left leaf at depth 1 has 
$\nodesupport[\clause,\node,\class]=6|\clauses| - 6$. The left leaf at depth 2 has 
$\nodesupport[\clause,\node,\class]=6$. 
All other left leaves have 
$\nodesupport[\clause,\node,\class]=0$. 
The rightmost leaf has $\nodesupport[\clause,\node,\class]=1$. 
With this, the rightmost leaf imposes that a single example reaches one leaf of every perfect tree (the desired 3-SAT solution). The other nodes of the tree just count the extra examples.

Appendix~\ref{appendix:hardness} proves that (\probname) is feasible if and only if the 3-SAT instance is a {\bf YES} instance. It also contains illustrative examples for the construction.
\end{proof}

The optimization version of the problem is to search for the solution that has the largest likelihood,
called the \emph{maximum likelihood dataset reconstruction problem}
(\mlprobname). This problem is $\np$-hard since even reconstructing one feasible solution is $\np$-complete. The maximum likelihood objective function will be formally introduced in the next section.

\section{Constraint Programming Approach}
\label{sec:method_implementation}
\label{methodology}

As seen in Section~\ref{sec_illustrative_example}, an inspection of the different trees gives sets of restrictions over feature values that concern a known number of examples of each class. However, it does not tell which example specifically satisfies which condition. Testing feasible combinations by inspection would require extensive trial and error, leading to an intractable process. Instead, we propose to formulate this search problem as a constraint programming (CP) model, permitting the use of efficient out-of-the-shelf solvers for such models. The model we design covers the most general case where bagging is used to train the forest, and includes discretization strategies 
specifically designed to help the solution process. Note that, while we focus on CP, Mixed-Integer Linear Programming (MILP) could also be employed instead. However, having conducted experiments with both techniques, and as demonstrated in Appendix~\ref{appendix:milp_models}, CP generally achieved better performance and permitted to handle bagging much more effectively.

For our mathematical formulation, we define three sets of decision variables. The first one assigns training examples to a corresponding class. The second assigns the training examples to the trees' leaves, and the third connects the attributes' values to the splits leading to their assigned leaf.

\begin{itemize}
    \item $\forall \example \in \{1..\nexamples\}, \forall \class \in \classes$: $\varz[\example,\class]$ is 1 if training example $k$ is considered as part of class $\class$, else 0.
    \item $\forall \tree \in \forest, \forall \node \in \leaves[\tree], \forall \example \in \{1..\nexamples\}, \forall \class \in \classes$: $\varyb[\tree,\node,\example,\class] \in \Z_+$ is the number of times
     training example $\example$ is classified by leaf $\node$ within tree $\tree$ as class $\class$.
    \item $\forall \example \in \{1..\nexamples\}, \forall \feature \in \{1..\nattributes\}$: $\varx[\example,\feature] \in \{ 0;1 \}$ is the value of feature $\feature$ for example $\example$ in the reconstruction.
\end{itemize}
To define the objective function, we will assume that a training example appears at most $\maxval$ times in any tree, since higher values are very unlikely. 
Indeed, bootstrap sampling consists of sampling with replacement $\nexamples$ examples from a set of $\nexamples$ original examples. At each iteration of the bootstrap sampling process, each example has a probability of $\frac{1}{\nexamples}$ of being selected. The probability of an example being selected more than B times can then be computed as: $$1 - \sum\limits_{\val=0}^{B-1}\left(\left(\frac{1}{\nexamples}\right)^{\val} \cdot \left(\frac{\nexamples - 1}{\nexamples}\right)^{\nexamples - \val}\cdot \binom{\nexamples}{\val}\right).$$ If $\nexamples=100$ as in our experiments, the probability of an example appearing more than $7$ times in a bootstrap sampled training set is roughly $10^{-5}$. This value remains similar for larger values of $\nexamples$ (e.g., around $10^{-5}$ for $\nexamples=10^{10}$).

With this, we define $\values:=\{0,\ldots,\maxval\}$. Note that if no feasible solution is found using this default value, increasing it and solving the model again is possible. We now define a binary variable to capture how many times an example is used:
\begin{itemize}
    \item $\forall \tree \in \forest, \forall \example \in \{1..\nexamples\}, \forall \val \in \values$: $\varq[\tree,\example,\val]$ is 1 if training example $\example$ is used $\val$ times in tree $\tree$ and 0 otherwise
\end{itemize}

The constraints of our model are as follows. 

One-hot encoding:
\begin{itemize}
    \item $\forall \example \in \{1..\nexamples\}, \forall \ohegroup \in \ohevects: \quad \sum\limits_{\feature \in \ohegroup} \varx[\example,\feature] = 1$
\end{itemize}
Each example is assigned to exactly one class:
\begin{itemize}
    \item $\forall \example \in \{1..\nexamples\}: \quad \sum\limits_{\class \in  \classes} \varz[\example,\class] = 1$ %
\end{itemize}
If an example is not assigned a given class, it cannot be used as that class in any tree:
\begin{itemize}
    \item $\forall \example \in \{1..\nexamples\}, \forall \class \in  \classes:$
     
     $\textbf{if } \varz[\example,\class]=0 \textbf{ then} \sum\limits_{\tree \in \forest, \node \in  \leaves[\tree]} \varyb[\tree,\node,\example,\class] = 0$ %
\end{itemize}
Each leaf must capture exactly the defined number of examples from each class:
\begin{itemize}
 \item $\forall \tree \in \forest, \forall \node \in  \leaves[\tree], \forall \class \in \classes: \nodesupport[\tree,\node,\class] = \sum\limits_{{\example \in \{1..\nexamples\}}} \varyb[\tree,\node,\example,\class]$

\end{itemize}
If an example is captured by a leaf, the associated conditions must be enforced on its features:
\begin{itemize}
 \item $\forall \tree \in \forest, \forall \example \in \{1..\nexamples\}, \forall \node \in  \leaves[\tree]:$

 $\textbf{if } \sum\limits_{\class \in \classes}\varyb[\tree,\node,\example,\class] \geq 1 \textbf{ then} \quad \left(\bigwedge\limits_{\feature \in \positivesplits[\tree,\node]}{\varx[\example,\feature] = 1}\right) \land \left(\bigwedge\limits_{\feature \in \negativesplits[\tree,\node]}{\varx[\example,\feature] = 0}\right)$
\end{itemize}

The number of times a sample is used in a tree is consistent:
\begin{itemize}
\item $\forall \tree \in \forest, \forall \val \in \values, \forall \example\in \{1..\nexamples\}:$

$\sum\limits_{\node \in \leaves[\tree],\class \in \classes} \varyb[\tree,\node,\example,\class] = \val \iff \varq[\tree,\example,\val]=1$

\end{itemize}

We implemented this model using the \texttt{OR-Tools} CP-SAT solver~\citep{cpsatlp}, which requires extra variables and constraints to be introduced to model some of the above conditions. These details are presented in Appendix~\ref{appendix:cpdetails}.

\paragraph{Maximum log-likelihood objective.}
Since the above model could have many possible solutions when using bagging, we orient the search towards the solutions (datasets) that are the most likely. 
For a given tree $\tree$, let $p_{\val}$ be the probability that a sample $\example$ is chosen exactly $b$ times to train that tree.
By defining 
$p^q_{\tree\example\val}=p_{\val}$ if $\varq[\tree,\example,\val]=1$ and $p^q_{\tree\example\val}=1$ otherwise,
we can calculate the probability that the samples were chosen for the tree according to the $\varq[\tree,\example,\val]$ variables as
$\prod_{\example \in \{1..\nexamples\}} \prod_{\val \in \values} p^q_{\tree\example\val}.$ Therefore, considering the whole RF, the probability of a given solution is:
$$\prod\limits_{\tree \in \forest}\prod\limits_{\example \in \{1..\nexamples\}}  \prod\limits_{\val \in \values} p^q_{\tree\example\val}.$$
Maximizing this probability is equivalent to maximizing its logarithm; in other words, maximizing:
$$\sum\limits_{\tree \in \forest}\sum\limits_{\example \in \{1..\nexamples\}} \sum\limits_{\val \in \values} \log\left( p^q_{\tree\example\val} \right)=
 \sum\limits_{\tree \in \forest}\sum\limits_{\example \in \{1..\nexamples\}} \sum\limits_{\val \in \values} \log\left( p_{\val} \right) \varq[\tree,\example,\val].$$

\paragraph{Model simplifications when bagging is deactivated.} 
RFs can be trained using random subsets of features for each split but considering all the examples in each tree. In such situations without bagging, the CP model can be significantly simplified. Variables $\varyb[\tree,\node,\example,\class]$ will become binary and 
sum up to $1$ for each tree $\tree \in \forest$ and each example $\example \in \{1..\nexamples\}$, since each example will be used exactly once in each tree. 
Also, we know in advance how many examples are of each class $\class\in \classes$. Therefore, to match that data, we may fix the variables $\varz[\example,\class]$ in advance.
Finally, $\varq[\tree,\example,\val]$ are always fixed since every example is used exactly once in each tree. Thus, the objective function becomes constant, and the problem reduces to the search for a feasible solution.

\paragraph{Reconstructing non-binary attributes.} To streamline the presentation of the methodology and evaluation metric, we provided our model for the particular case of binary attributes. However, extending it to handle other types of attributes is possible with minimal changes, as detailed in Appendix~\ref{appendix:other_types_of_attributes} and implemented within our publicly available repository. In a nutshell, \emph{categorical} attributes are typically 
one-hot encoded for tree ensembles and directly handled by our formulation. \emph{Ordinal} features can be modeled as integer variables and only require a slight generalization of the constraints connecting the attributes' values to the assignment of the examples to the leaves. Finally, \emph{numerical} attributes can also be reconstructed: although they take values in a continuous space, the number of splits within the forest is finite, and so is the number of different intervals in which they can lie. Leveraging this observation, we can use ordinal features to model such possible intervals in the reconstruction. 

\section{Experimental Study}
\label{sec:experiments}

Through extensive experimental analyses, we aim to evaluate the effectiveness and accuracy of the proposed reconstruction attack, named \draft{} \emph{(Dataset Reconstruction Attack From Trained ensembles)}.
We first detail the experimental setup before discussing the results.

\subsection{Experimental Setup}
\label{subsec:expes_setup}

\paragraph{Datasets.} We rely on three popular datasets for binary classification in our experiments. We discretize each dataset's numerical attributes and one-hot encode the categorical ones. To keep a reasonably small number of features, we remove some attributes with the smallest support\footnote{Our binarized versions of these datasets are available in the supplementary material and will be available on our online repository upon publication.}.
First, the COMPAS dataset (analyzed by~\citealt{angwin2016machine}) gathers records about criminal offenders in the Broward County of Florida collected from 2013 and 2014, with the task being recidivism prediction. Our preprocessed version includes $7{,}206$ examples described by $15$ binary attributes.
Second, the UCI Adult Income dataset~\citep{Dua:2019} contains data regarding the 1994 U.S. census to predict whether a person earns more than \$50K/year.
After preprocessing, our dataset includes $48{,}842$ examples and $20$ binary features. 
Finally, we use the Default of Credit Card Client dataset~\citep{yeh2009comparisons}, to predict whether a person will default in payment (the next time they use their credit card). Our preprocessed version includes $29{,}986$ examples and $22$ binary attributes.

\paragraph{Reconstruction error evaluation.} To assess the attack's success, 
we first compute the Manhattan distance between each reconstructed and original example. 
The resulting distance matrix then instantiates a minimum weight matching in bipartite graphs, also known as linear sum assignment problem, which we solve using the \scipy~\cite{2020SciPy-NMeth} Python library. Once the datasets are aligned, we then measure the proportion of binary attributes that differ between both.

\paragraph{Random reconstruction baseline.} As mentioned in Section~\ref{sec:rel_works}, reconstruction attacks rarely target reconstructing an entire training set, and none of them apply to our setup (\emph{i.e.,} leveraging an RF to rebuild its complete training set). We then consider a baseline adversary with the same knowledge as ours (in particular, the number of examples $\nexamples$, the different attributes $\nattributes$ including their one-hot encoding $\ohevects$) except for the RF itself. 
The adversary then randomly guesses each attribute of each example, remaining consistent with the one-hot encoding information.
The reconstruction error is finally assessed, as described in the previous paragraph.
We average such computation over $100$ random runs and report the average value. By comparing this baseline with the performances of our approach, one can then quantify how much additional information can be extracted from~the~RF.

\paragraph{Target RFs.} To train our target models (\emph{i.e.,} the RFs from which we attempt to reconstruct the training data), we use the popular implementation provided by the \sklearn{} library. 
For each dataset, we learn RFs with varying parameters. More precisely, we use a number of trees $\lvert \forest \rvert \in \{1, 5, 10, 20, 30, 40, 50, 60, 70, 80, 90, 100\}$ with maximum depth $\depth_{max} \in \{\text{None},2,3,4,5,10\}$ (where $\text{None}$ stands for no maximum depth constraint).
For each experiment, we randomly sample $100$ examples from the entire dataset to form a training set, and use the remaining ones as a test set to verify to what extent the models generalize. 
We repeat the experiment five times using different seeds for the random sampling, and report the average results and their standard deviation across the five runs. 

\begin{figure*}[h!]
  \centering
  \begin{subfigure}[t]{0.48\textwidth}
    \centering\includegraphics[width=0.93\textwidth]{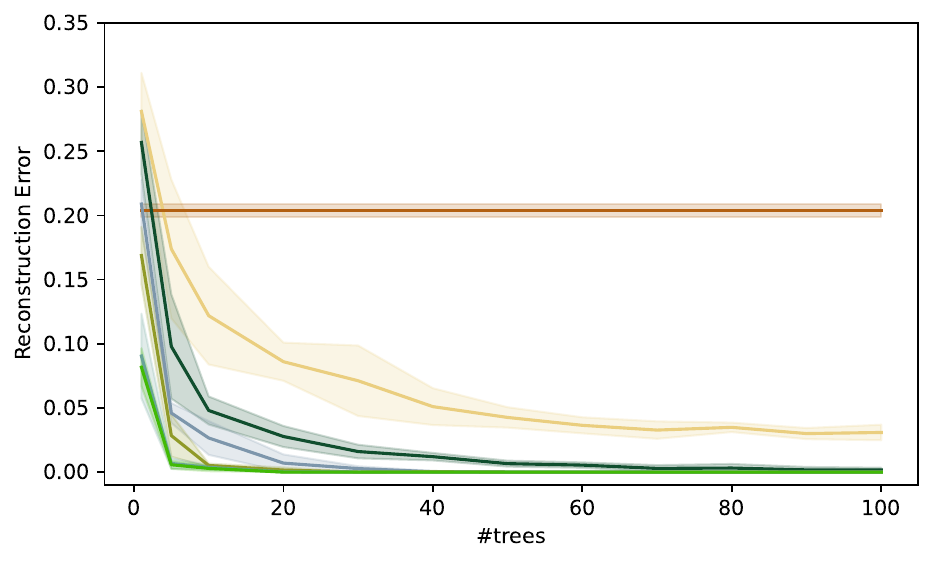}
    \caption{COMPAS dataset, bagging not used}\label{fig:results_compas_no_bagging}
  \end{subfigure}
  \begin{subfigure}[t]{0.48\textwidth}
    \centering\includegraphics[width=0.93\textwidth]{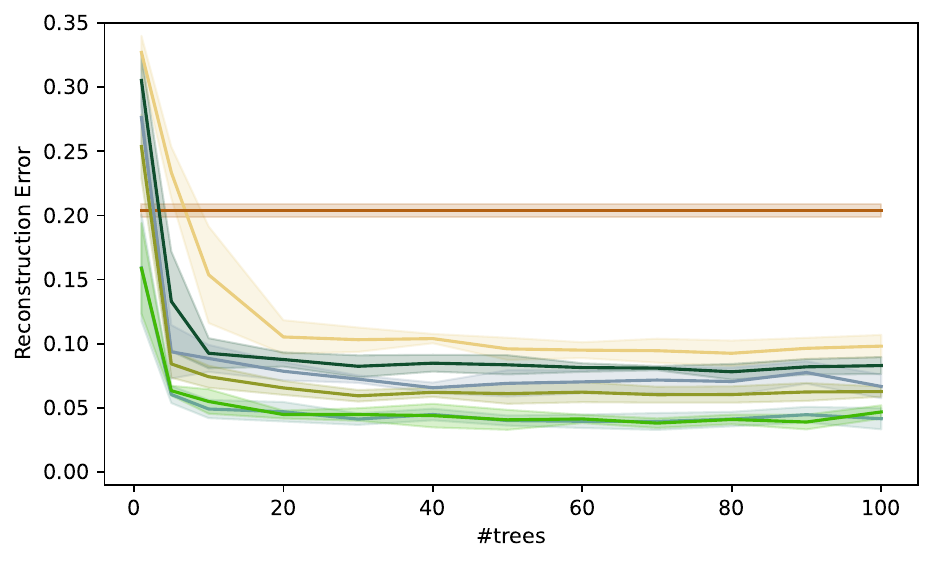}
    \caption{COMPAS dataset, bagging used}\label{fig:results_compas_bagging}
  \end{subfigure}
    \begin{subfigure}[t]{0.48\textwidth}
    \centering\includegraphics[width=0.93\textwidth]{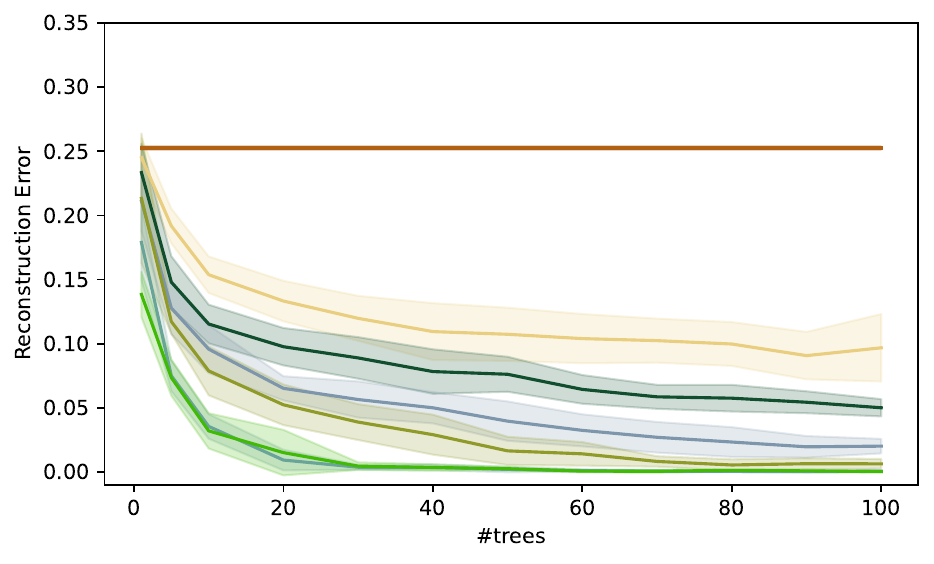}
    \caption{UCI Adult Income dataset, bagging not used}\label{fig:results_adult_no_bagging}
  \end{subfigure}
  \begin{subfigure}[t]{0.48\textwidth}
    \centering\includegraphics[width=0.93\textwidth]{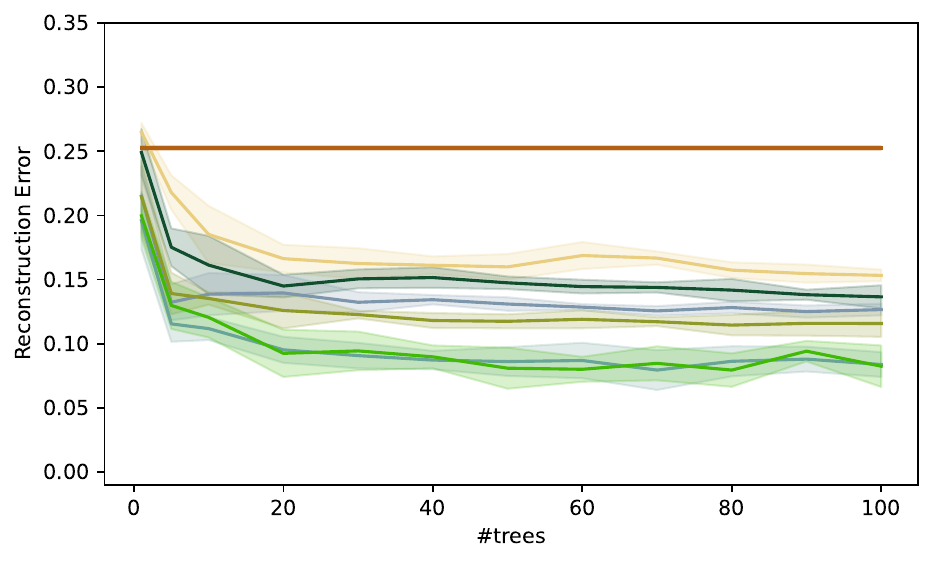}
    \caption{UCI Adult Income dataset, bagging used}\label{fig:results_adult_bagging}
  \end{subfigure}
      \begin{subfigure}[t]{0.48\textwidth}
    \centering\includegraphics[width=0.93\textwidth]{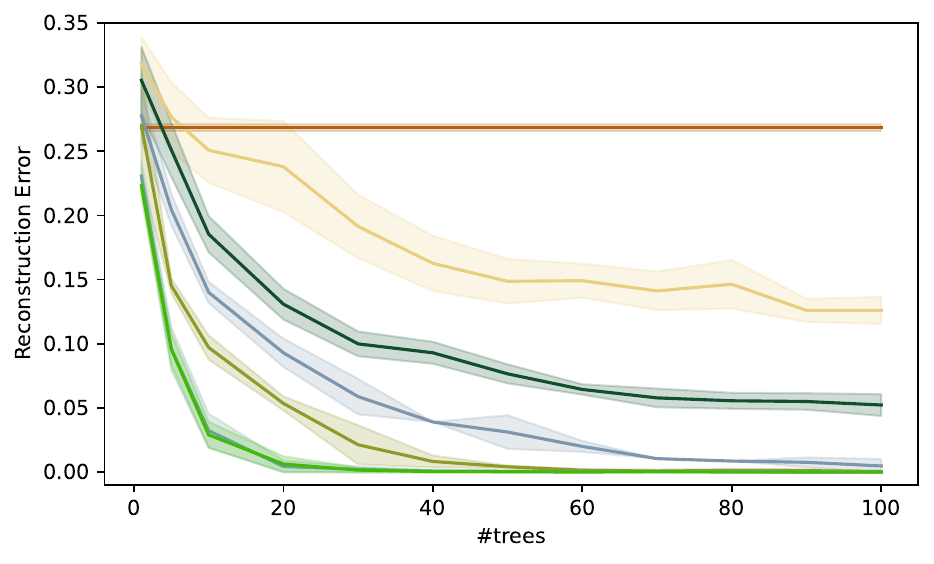}
    \caption{Default of Credit Card Client dataset, bagging not used}\label{fig:results_default_credit_no_bagging}
  \end{subfigure}
  \begin{subfigure}[t]{0.48\textwidth}
    \centering\includegraphics[width=0.93\textwidth]{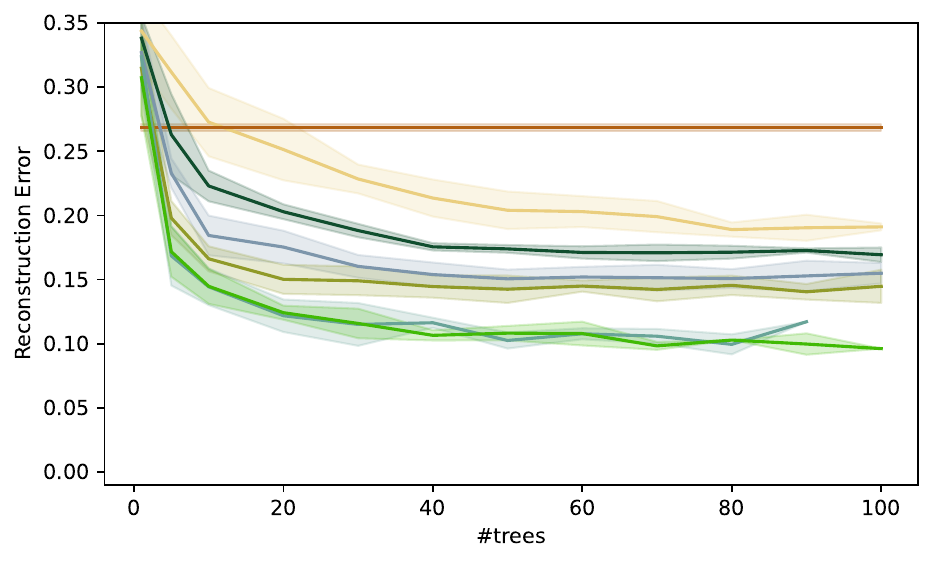}
    \caption{Default of Credit Card Client dataset, bagging used}\label{fig:results_default_credit_bagging}
  \end{subfigure}
  \includegraphics[width=0.75\textwidth]{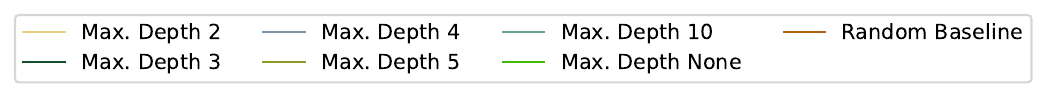}
    \caption{Average reconstruction error as a function of the number of trees $\lvert \forest \rvert$ within the target forest $\forest$, for different maximum depth values $\depth_{max}$ and for the random baseline.}
    \label{fig:results_all}
\end{figure*}

\paragraph{Dataset reconstruction.} The proposed CP models described in Section~\ref{sec:method_implementation} are solved using the \ortools{} CP-SAT solver~\citep{cpsatlp} (v9).
Each model resolution is limited to a maximum of five hours of CPU time using $16$ threads with up to $6$ GB of RAM for each thread. Note, however, that while the CP models handling bagging often reached this time limit, they usually were able to find feasible solutions in a much shorter time. 
All experiments are run on a computing cluster over a set of homogeneous nodes using Intel Platinum 8260 Cascade Lake @ 2.4GHz CPU.

All the material (source code and data sets) needed to reproduce our experiments is accessible at \url{https://github.com/vidalt/DRAFT} under a MIT license. 

\subsection{Results}
\label{subsec:results}

The results of our experiments are reported in Figure~\ref{fig:results_all} 
for all three datasets, with or without the use of bagging to train the target RFs.
More precisely, we plot the average reconstruction error as a function of the number of trees $\lvert \forest \rvert$, for several values of the trees' maximum depth~$\depth_{max}$. We observe several trends that are consistent across all three datasets. In all cases, as expected, increasing the trees' depth or the number of trees in the forest decreases the reconstruction error as it provides more information regarding the training data. When bagging is not used to train the RFs, the reconstruction error reaches $0$ in all cases for the deepest forests (recall that the default parameters of the \sklearn{}
library is no maximum depth constraint). This is not the case when using bagging. In such cases, the reconstruction error reaches a threshold and stops improving even for larger forests.
In Appendix~\ref{appendix:benchmark}, we further investigate the effect of bagging on protecting the training data against reconstruction attacks and conduct additional experiments. Our main finding is that this performance drop precisely comes from the difficulty of guessing how many times each example went through each tree: bagging intrinsically provides a form of protection regarding the training data. This is consistent with theoretical results stating that bagging provides (weak) differential privacy guarantees~\citep{DBLP:conf/ijcai/LiuJG21}.

\begin{table}[tb]
\centering
\caption{Average run time and number of runs for which the solver did not come up with a feasible solution (\textbf{\#Runs}), for a fixed number of trees $\lvert \forest \rvert = 100$ (default value).}\label{tab:runtimes_timeouts_fixed_n_trees}
\centering
\scalebox{0.95}
{
\begin{tabular}{c|ccccc}
\cline{2-5}
&\multicolumn{1}{c|}{\multirow{2}{*}{\textbf{\begin{tabular}[c]{@{}c@{}}Max.\\ Depth\end{tabular}}}} & \multicolumn{2}{c|}{\textbf{No Bagging}}                           & \textbf{Bagging}    \\ \cline{3-5} 
&\multicolumn{1}{c|}{}                                                                               & \textbf{Avg. T (s)} & \multicolumn{1}{c|}{\textbf{\#Runs}} & \textbf{\#Runs} \\ \hline
\parbox[t]{2mm}{\multirow{6}{*}{\rotatebox[origin=c]{90}{\textbf{COMPAS}}}} 
&2                                                                                                   & 9.5                     & 0                                        & 0                   \\
&3                                                                                                   & 36.5                    & 0                                        & 0                   \\
&4                                                                                                   & 45.7                    & 0                                        & 0                   \\
&5                                                                                                   & 70.0                    & 0                                        & 0                   \\
&10                                                                                                  & 110.9                    & 0                                        & 0                   \\
&None                                                                                                & 100.8                    & 0                                        & 0                   \\ \hline
\parbox[t]{2mm}{\multirow{6}{*}{\rotatebox[origin=c]{90}{\textbf{Adult}}}}
&2                                                                                                   & 20.2                     & 0                                        & 0                   \\
&3                                                                                                   & 81.5                   & 0                                        & 0                   \\
&4                                                                                                   & 1943.5                   & 0                                     & 0                   \\
&5                                                                                                   & 1290.7                   & 0                                     & 0                   \\
&10                                                                                                  & 346.4                   & 0                                        & 1/5                \\
&None                                                                                                & 196.3                   & 0                                        & 1/5                \\ \hline
\parbox[t]{2mm}{\multirow{6}{*}{\rotatebox[origin=c]{90}{\textbf{Default Credit}}}}
&2                                                                                                   & 35.5                    & 0                                        & 0                   \\
&3                                                                                                   & 300.8                   & 1/5                                     & 0                   \\
&4                                                                                                   & 6040.0                  & 2/5                                    & 0                   \\
&5                                                                                                   & 1358.5                  & 0                                     & 0                   \\
&10                                                                                                  & 382.8                   & 0                                        & 0               \\
&None                                                                                                & 165.0                    & 0                                        & 4/5               \\ \hline
\end{tabular}
}
\end{table}
\begin{table}[tb]
\centering
\caption{Average run time and number of runs for which the solver did not come up with a feasible solution (\textbf{\#Runs}), for no maximum depth constraint (default value).}\label{tab:runtimes_timeouts_fixed_max_depth}
\centering
\scalebox{0.95}
{
\begin{tabular}{c|ccccc}
\cline{2-5}
&\multicolumn{1}{c|}{\multirow{2}{*}{\textbf{\begin{tabular}[c]{@{}c@{}}$\lvert \forest \rvert$\end{tabular}}}} & \multicolumn{2}{c|}{\textbf{No Bagging}}                           & \textbf{Bagging}    \\ \cline{3-5} 
&\multicolumn{1}{c|}{}                                                                               & \textbf{Avg. T (s)} & \multicolumn{1}{c|}{\textbf{\#Runs}} & \textbf{\#Runs} \\ \hline
\parbox[t]{2mm}{\multirow{6}{*}{\rotatebox[origin=c]{90}{\textbf{COMPAS}}}} 
&1                                                                                                   & 0.7                     & 0                                        & 0                   \\
&10                                                                                                   & 8.4                    & 0                                        & 0                   \\
&30                                                                                                   & 39.8                    & 0                                        & 0                   \\
&50                                                                                                   & 53.0                    & 0                                        & 0                   \\
&80                                                                                                  & 89.7                    & 0                                        & 0                   \\
&100                                                                                                & 100.8                    & 0                                        & 0                   \\ \hline
\parbox[t]{2mm}{\multirow{6}{*}{\rotatebox[origin=c]{90}{\textbf{Adult}}}}
&1                                                                                                   & 0.8                     & 0                                        & 0                   \\
&10                                                                                                   & 74.2                   & 0                                        & 0                   \\
&30                                                                                                    & 84.9                   & 0                                     & 0                   \\
&50                                                                                                   & 85.1                   & 0                                     & 0                   \\
&80                                                                                                  & 119.9                   & 0                                        & 0                \\
&100                                                                                                & 196.3                   & 0                                        & 1/5                \\ \hline
\parbox[t]{2mm}{\multirow{6}{*}{\rotatebox[origin=c]{90}{\textbf{Default Credit}}}}
&1                                                                                                   & 0.9                    & 0                                        & 0                   \\
&10                                                                                                   & 129.7                   & 0                                     & 0                   \\
&30                                                                                                   & 47.7                  & 0                                    & 0                   \\
&50                                                                                                   & 66.2                  & 0                                     & 2/5                   \\
&80                                                                                                  & 161.3                   & 0                                        & 4/5               \\
&100                                                                                                & 165.0                    & 0                                        & 4/5               \\ \hline
\end{tabular}
}
\end{table}

We report in Tables~\ref{tab:runtimes_timeouts_fixed_n_trees} and~\ref{tab:runtimes_timeouts_fixed_max_depth} (respectively for a fixed number of trees $\lvert \forest \rvert = 100$ and no fixed maximum depth, both corresponding to \sklearn{}'s default values) the average run times for the reconstruction model without bagging, along with the number of times (\#Runs) the solver was unable to find a feasible solution before timeout. When bagging is used, most runs attain the time limit and return a solution but cannot prove optimality. Run times in this context are not informative, so we only report the number of times the solver did not find any feasible solution before the time limit. Note that the few runs that did not produce a feasible solution are excluded from Figure~\ref{fig:results_all}.
We observe from Table~\ref{tab:runtimes_timeouts_fixed_n_trees} that the formulation without bagging efficiently handles the problems that are under-constrained (shallow trees) or over-constrained (deep trees). Intermediate cases seem to require more computational effort, and in a few cases, the solver did not find a feasible solution.
When using bagging, the size of the models seems to matter the most, as the solver only failed to find feasible solutions with the deepest forests.
The same observation holds from Table~\ref{tab:runtimes_timeouts_fixed_max_depth}, as the only runs for which the solver did not find a feasible solution are those with the largest numbers of trees. When not using bagging, the solution times scale approximately linearly with the number of trees. We report in Appendix~\ref{appendix:scalability_reconstr} additional experiments regarding our method's scalability with respect to the number of training examples $\nexamples$. The results demonstrate its ability to reconstruct considerably larger datasets, with the reconstruction error remaining very small. Furthermore, while the size of the CP model's search space increases exponentially with the number of reconstructed training examples $\nexamples$, in practice, reconstruction time increases polynomially (approximately quadratic or sub-quadratic) with $\nexamples$.

As discussed in Section~\ref{sec:rel_works}, most works in the reconstruction attacks literature only target reconstructing part of the dataset attributes (generally, a single one), assuming the others are publicly known. In Appendix~\ref{appendix:partial_reconstr}, we perform complementary experiments on such partial reconstruction. The results show that our approach successfully leverages knowledge of part of the dataset attributes, which results in lower error rates for the other ones.

\section{Discussion and Conclusions}
\label{sec:conclusion}

This study has shown that the structure of a trained RF can be exploited to reconstruct most (if not all) of its training data. It introduced a new paradigm of attack, leveraging mathematical programming tools to encode the structure of an RF and relying on a general-purpose CP solver to find the most likely reconstructions of the training data.
Due to the high redundancy of RFs built using off-the-shelf ML libraries with their default parameters, the resulting problem is often strongly constrained, resulting in a high reconstruction rate. While theoretical $\np$-completeness theorems indicate that such an attack may not be computationally tractable at scale, the tremendous progress in CP/MILP solvers has made it practical to solve larger and larger problems over time. Therefore, it may just be a question of time until data breaches happen for large datasets.

The fact that the proposed framework is based on mathematical programming techniques opens the door to many promising research perspectives. The approach could be tested on various types of attributes (numerical, categorical) without the need for feature binarization.
Performance improvements could also be achieved through different problem reformulations or additional valid inequalities. Notably, one could leverage the information gain criterion used to select the splits while building the decision tree to eliminate combinations of attributes' values leading to different splits.

Our framework can also be used as a building block for other types of inference attacks, such as membership inference or property inference. Furthermore, we considered canonical RFs trained without privacy-preserving techniques, representing most of what popular libraries do by default. Investigating the effectiveness of common privacy-preserving mechanisms, such as the widely used differential privacy~\citep{dwork2014algorithmic}, would bring additional insights.
Though this may lead to difficult models, the proposed CP (or MILP) formulations could be extended to infer the noise added by the protection mechanisms on the released per-node counts~\citep{dp-trees-survey,DBLP:conf/pods/DinurN03}. On the same line, adapting the formulation to work without the knowledge of the per-leaf per-class counts (hence only supposing that each leaf contains at least one example from the predicted class), or considering other gray-box setups, are interesting directions. Finally, another interesting direction is to apply the proposed methodology to other types of ensembles 
and ML models.

\section*{Impact Statement}

ML models are commonly trained using large amounts of data, often including personal or private information. The flourishing literature on inference attacks against ML models showed that models might jeopardize their training data even when accessed in a black-box manner (\emph{i.e.,} through a prediction API). Furthermore, transparency requirements encourage practitioners to either provide additional explanations for their model's decisions or to entirely release such models, potentially opening up to new attacks.

In this study, we have demonstrated that the structure of a trained RF can be leveraged to reconstruct most (if not all) of its training data. Importantly, our proposed method only leverages the information provided by popular libraries such as \sklearn{}. While NP-harness theorems and scalability issues limit the current applicability of our approach, our results already demonstrate its effectiveness on datasets of practical significance. These findings underscore a critical vulnerability inherent to widely adopted ensemble methods, warranting attention and mitigation. The methods and experiments developed in this study have two main implications: (i) raising awareness against the privacy vulnerabilities of ensemble methods and (ii) providing promising research paths to stress test privacy-preserving mechanisms, aiming to protect such models before releasing them.



\newpage
\appendix
\onecolumn

\section{Proof of NP-Hardness (Theorem~\ref{thm:nphard})}\label{appendix:hardness}

The description of the construction of the instance of (\probname) can be found in the main text. We start by providing an example to clarify the construction. 

Consider the following 3-SAT instance with $|\clauses| = 3$ clauses and $\nsatvars = 4$ variables:
\begin{equation}
( u_1 \vee \bar{u_2} \vee \bar{u_3} ) \wedge ( u_1 \vee u_2 \vee u_4  ) \wedge ( \bar{u_2} \vee \bar{u_3} \vee \bar{u_4} )
\label{eq:3satex}
\end{equation}

Figure~\ref{fig:3satex} shows the constructed (\probname) instance arising from it. 

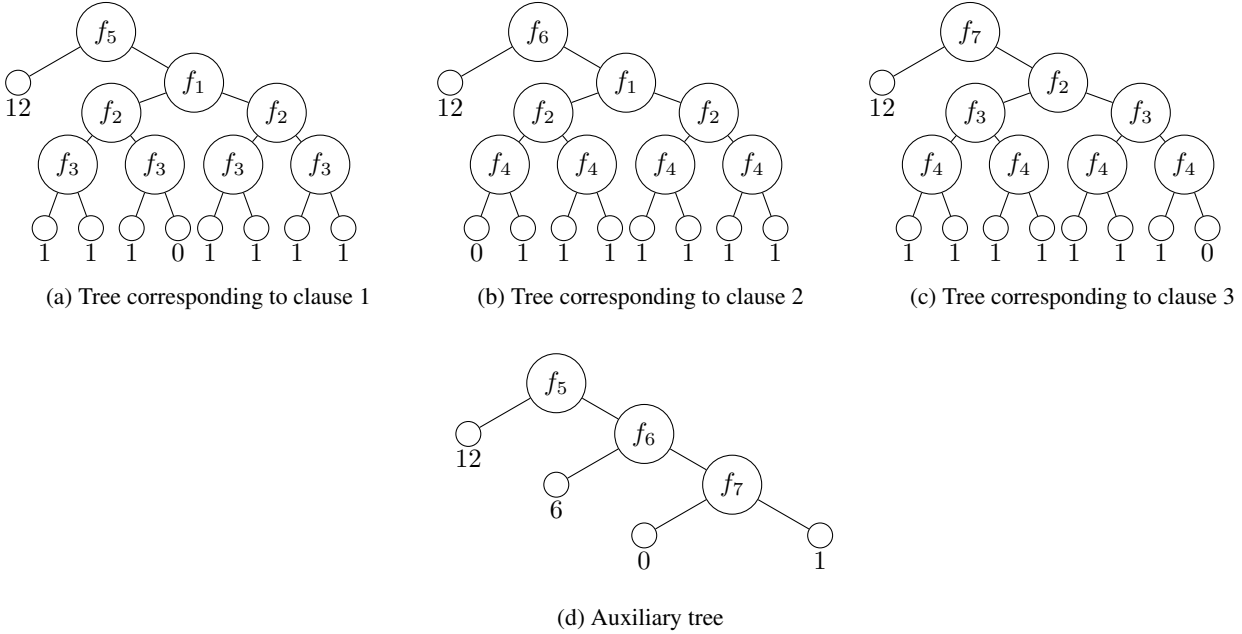
\begin{figure}[h!]
\begin{subfigure}[t]{0.33\textwidth}
\begin{tikzpicture}[scale=0.6,
    node/.style={%
      draw,
      circle,
    },
  ]

    \node [node] (v0) {$f_5$};
    \path (v0) ++(-150:1.5\nodeDist) node [node] (v1) {};
    \path (v0) ++(-30:1.5\nodeDist) node [node] (v2) {$f_1$};
    \path (v2) ++(-160:1.3\nodeDist) node [node] (v3) {$f_2$};
    \path (v2) ++(-20:1.3\nodeDist) node [node] (v4) {$f_2$};
    \path (v3) ++(-130:\nodeDist) node [node] (v5) {$f_3$};
    \path (v3) ++(-50:\nodeDist) node [node] (v6) {$f_3$};
    \path (v4) ++(-130:\nodeDist) node [node] (v7) {$f_3$};
    \path (v4) ++(-50:\nodeDist) node [node] (v8) {$f_3$};
    \path (v5) ++(-110:\nodeDist) node [node] (v9) {};
    \path (v5) ++(-70:\nodeDist) node [node] (v10) {};
    \path (v6) ++(-110:\nodeDist) node [node] (v11) {};
    \path (v6) ++(-70:\nodeDist) node [node] (v12) {};
    \path (v7) ++(-110:\nodeDist) node [node] (v13) {};
    \path (v7) ++(-70:\nodeDist) node [node] (v14) {};
    \path (v8) ++(-110:\nodeDist) node [node] (v15) {};
    \path (v8) ++(-70:\nodeDist) node [node] (v16) {};  
    \path (v1) ++(270:0.1\nodeDist) node [below] {$12$};
    \path (v9) ++(270:0.1\nodeDist) node [below] {$1$};
    \path (v10) ++(270:0.1\nodeDist) node [below] {$1$};
    \path (v11) ++(270:0.1\nodeDist) node [below] {$1$};
    \path (v12) ++(270:0.1\nodeDist) node [below] {$0$};
    \path (v13) ++(270:0.1\nodeDist) node [below] {$1$};
    \path (v14) ++(270:0.1\nodeDist) node [below] {$1$};
    \path (v15) ++(270:0.1\nodeDist) node [below] {$1$};
    \path (v16) ++(270:0.1\nodeDist) node [below] {$1$};
    
    \draw (v0) -- (v1) ;
    \draw (v0) -- (v2) ;
    \draw (v2) -- (v3) ;
    \draw (v2) -- (v4) ;
    \draw (v3) -- (v5) ;
    \draw (v3) -- (v6) ;
    \draw (v4) -- (v7) ;
    \draw (v4) -- (v8) ;
    \draw (v5) -- (v9) ;
    \draw (v5) -- (v10) ;
    \draw (v6) -- (v11) ;
    \draw (v6) -- (v12) ;
    \draw (v7) -- (v13) ;
    \draw (v7) -- (v14) ;
    \draw (v8) -- (v15) ;
    \draw (v8) -- (v16) ;

\end{tikzpicture}
\caption{Tree corresponding to clause 1}
\end{subfigure}
\begin{subfigure}[t]{0.33\textwidth}
\begin{tikzpicture}[scale=0.6,
    node/.style={%
      draw,
      circle,
    },
  ]

    \node [node] (v0) {$f_6$};
    \path (v0) ++(-150:1.5\nodeDist) node [node] (v1) {};
    \path (v0) ++(-30:1.5\nodeDist) node [node] (v2) {$f_1$};
    \path (v2) ++(-160:1.3\nodeDist) node [node] (v3) {$f_2$};
    \path (v2) ++(-20:1.3\nodeDist) node [node] (v4) {$f_2$};
    \path (v3) ++(-130:\nodeDist) node [node] (v5) {$f_4$};
    \path (v3) ++(-50:\nodeDist) node [node] (v6) {$f_4$};
    \path (v4) ++(-130:\nodeDist) node [node] (v7) {$f_4$};
    \path (v4) ++(-50:\nodeDist) node [node] (v8) {$f_4$};
    \path (v5) ++(-110:\nodeDist) node [node] (v9) {};
    \path (v5) ++(-70:\nodeDist) node [node] (v10) {};
    \path (v6) ++(-110:\nodeDist) node [node] (v11) {};
    \path (v6) ++(-70:\nodeDist) node [node] (v12) {};
    \path (v7) ++(-110:\nodeDist) node [node] (v13) {};
    \path (v7) ++(-70:\nodeDist) node [node] (v14) {};
    \path (v8) ++(-110:\nodeDist) node [node] (v15) {};
    \path (v8) ++(-70:\nodeDist) node [node] (v16) {};  
    \path (v1) ++(270:0.1\nodeDist) node [below] {$12$};
    \path (v9) ++(270:0.1\nodeDist) node [below] {$0$};
    \path (v10) ++(270:0.1\nodeDist) node [below] {$1$};
    \path (v11) ++(270:0.1\nodeDist) node [below] {$1$};
    \path (v12) ++(270:0.1\nodeDist) node [below] {$1$};
    \path (v13) ++(270:0.1\nodeDist) node [below] {$1$};
    \path (v14) ++(270:0.1\nodeDist) node [below] {$1$};
    \path (v15) ++(270:0.1\nodeDist) node [below] {$1$};
    \path (v16) ++(270:0.1\nodeDist) node [below] {$1$};
    
    \draw (v0) -- (v1) ;
    \draw (v0) -- (v2) ;
    \draw (v2) -- (v3) ;
    \draw (v2) -- (v4) ;
    \draw (v3) -- (v5) ;
    \draw (v3) -- (v6) ;
    \draw (v4) -- (v7) ;
    \draw (v4) -- (v8) ;
    \draw (v5) -- (v9) ;
    \draw (v5) -- (v10) ;
    \draw (v6) -- (v11) ;
    \draw (v6) -- (v12) ;
    \draw (v7) -- (v13) ;
    \draw (v7) -- (v14) ;
    \draw (v8) -- (v15) ;
    \draw (v8) -- (v16) ;

\end{tikzpicture}
\caption{Tree corresponding to clause 2}
\end{subfigure}
\begin{subfigure}[t]{0.33\textwidth}
\begin{tikzpicture}[scale=0.6,
    node/.style={%
      draw,
      circle,
    },
  ]

    \node [node] (v0) {$f_7$};
    \path (v0) ++(-150:1.5\nodeDist) node [node] (v1) {};
    \path (v0) ++(-30:1.5\nodeDist) node [node] (v2) {$f_2$};
    \path (v2) ++(-160:1.3\nodeDist) node [node] (v3) {$f_3$};
    \path (v2) ++(-20:1.3\nodeDist) node [node] (v4) {$f_3$};
    \path (v3) ++(-130:\nodeDist) node [node] (v5) {$f_4$};
    \path (v3) ++(-50:\nodeDist) node [node] (v6) {$f_4$};
    \path (v4) ++(-130:\nodeDist) node [node] (v7) {$f_4$};
    \path (v4) ++(-50:\nodeDist) node [node] (v8) {$f_4$};
    \path (v5) ++(-110:\nodeDist) node [node] (v9) {};
    \path (v5) ++(-70:\nodeDist) node [node] (v10) {};
    \path (v6) ++(-110:\nodeDist) node [node] (v11) {};
    \path (v6) ++(-70:\nodeDist) node [node] (v12) {};
    \path (v7) ++(-110:\nodeDist) node [node] (v13) {};
    \path (v7) ++(-70:\nodeDist) node [node] (v14) {};
    \path (v8) ++(-110:\nodeDist) node [node] (v15) {};
    \path (v8) ++(-70:\nodeDist) node [node] (v16) {};  
    \path (v1) ++(270:0.1\nodeDist) node [below] {$12$};
    \path (v9) ++(270:0.1\nodeDist) node [below] {$1$};
    \path (v10) ++(270:0.1\nodeDist) node [below] {$1$};
    \path (v11) ++(270:0.1\nodeDist) node [below] {$1$};
    \path (v12) ++(270:0.1\nodeDist) node [below] {$1$};
    \path (v13) ++(270:0.1\nodeDist) node [below] {$1$};
    \path (v14) ++(270:0.1\nodeDist) node [below] {$1$};
    \path (v15) ++(270:0.1\nodeDist) node [below] {$1$};
    \path (v16) ++(270:0.1\nodeDist) node [below] {$0$};
    
    \draw (v0) -- (v1) ;
    \draw (v0) -- (v2) ;
    \draw (v2) -- (v3) ;
    \draw (v2) -- (v4) ;
    \draw (v3) -- (v5) ;
    \draw (v3) -- (v6) ;
    \draw (v4) -- (v7) ;
    \draw (v4) -- (v8) ;
    \draw (v5) -- (v9) ;
    \draw (v5) -- (v10) ;
    \draw (v6) -- (v11) ;
    \draw (v6) -- (v12) ;
    \draw (v7) -- (v13) ;
    \draw (v7) -- (v14) ;
    \draw (v8) -- (v15) ;
    \draw (v8) -- (v16) ;

\end{tikzpicture}
\caption{Tree corresponding to clause 3}
\end{subfigure}

\vspace{0.2in}

\begin{subfigure}[t]{\textwidth}
\begin{center}
\begin{tikzpicture}[scale=0.6,
    node/.style={%
      draw,
      circle,
    },
  ]

    \node [node] (v0) {$f_5$};
    \path (v0) ++(-150:1.5\nodeDist) node [node] (v1) {};
    \path (v0) ++(-30:1.5\nodeDist) node [node] (v2) {$f_6$};
    \path (v2) ++(-150:1.5\nodeDist) node [node] (v3) {};
    \path (v2) ++(-30:1.5\nodeDist) node [node] (v4) {$f_7$};
    \path (v4) ++(-150:1.5\nodeDist) node [node] (v7) {};
    \path (v4) ++(-30:1.5\nodeDist) node [node] (v8) {};

    \path (v1) ++(270:0.1\nodeDist) node [below] {$12$};
    \path (v3) ++(270:0.1\nodeDist) node [below] {$6$};
    \path (v7) ++(270:0.1\nodeDist) node [below] {$0$};
    \path (v8) ++(270:0.1\nodeDist) node [below] {$1$};

    \draw (v0) -- (v1) ;
    \draw (v0) -- (v2) ;
    \draw (v2) -- (v3) ;
    \draw (v2) -- (v4) ;
    \draw (v4) -- (v7) ;
    \draw (v4) -- (v8) ;

\end{tikzpicture}
\end{center}
\caption{Auxiliary tree}
\end{subfigure}
\caption{Example of (\probname) instance originating from \eqref{eq:3satex}. The left branches correspond to setting the feature to 0. The right ones set the feature to 1. The numbers below are the $\nodesupport[\clause,\node,\class]$ values.}
\label{fig:3satex}
\end{figure}

Theorem~\ref{thm:nphard} follows from Claims~\ref{cl:3sat2drp} and \ref{cl:drp23sat}.

\begin{claim}
If the 3-SAT instance is a YES instance, then (\probname) is feasible. 
\label{cl:3sat2drp}
\end{claim}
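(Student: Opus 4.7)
The plan is to exhibit an explicit feasible solution $(x,z,y)$. Given a satisfying assignment $u^* \in \{0,1\}^{\nsatvars}$ of the 3-SAT formula, I introduce one ``real'' example $x^*$ with $(x^*)_j = u^*_j$ for $j \in \{1,\ldots,\nsatvars\}$ and $(x^*)_j = 1$ for $j \in \{\nsatvars+1,\ldots,\nsatvars+|\clauses|\}$, together with six ``dummy'' examples per clause, for a total of $6|\clauses|+1 = \nexamples$ examples. The dummies attached to clause $\clause$ receive tag bits $x_{\nsatvars+\clause}=1$ and $x_{\nsatvars+\clause'}=0$ for $\clause'\neq\clause$; their first $\nsatvars$ bits will be chosen below. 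Since there is only one class, all $z_k$ are fixed to that class.

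Inside the clause-tree for $\clause$, the real example descends right at the root (its tag bit $x_{\nsatvars+\clause}$ equals $1$) and, because $u^*$ satisfies the clause, lands in one of the seven satisfying leaves of the perfect binary subtree on the three literals of the clause. The six dummies tagged for $\clause$ also descend right at the root, and I select their three relevant literal bits so that each occupies a distinct one of the remaining six satisfying leaves; this is possible because there are exactly seven satisfying literal-assignments and seven examples to place. The $6(|\clauses|-1)$ dummies tagged for other clauses have tag bit $x_{\nsatvars+\clause}=0$, so they all descend left at the root and fall into the absorbing leaf whose required count is $6|\clauses|-6$. Thus every leaf cardinality is matched in every clause tree.

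In the auxiliary tree, the real example goes right at every split because all tag bits of $x^*$ equal $1$, so it reaches the rightmost leaf (count $1$). The six dummies tagged for clause $1$ go right at the root and then left at depth $1$ (their bit $x_{\nsatvars+2}$ being $0$), filling the depth-$2$ left leaf of required count $6$. Every dummy tagged for a clause $\clause \geq 2$ has $x_{\nsatvars+1} = 0$, so it goes left at the root and joins the depth-$1$ left leaf of required count $6(|\clauses|-1)$. Deeper left leaves receive no examples, matching their zero counts. The final checks that each example's attribute values are consistent with $\positivesplits[\tree,\node]$ and $\negativesplits[\tree,\node]$ along its assigned path are immediate by construction.

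The main step to be careful about is the bijective assignment of the seven examples (one real, six dummies) to the seven satisfying leaves of each clause tree. This works because each satisfying leaf corresponds to a distinct truth assignment of the three literals, and because the literal bits of a dummy are unconstrained outside its own clause tree: in every other clause tree the dummy is routed entirely by its tag bits into the absorbing leaf, and in the auxiliary tree only tag bits matter, so the first $\nsatvars$ bits of each dummy may be chosen freely to place it in the desired satisfying leaf of its own clause tree.
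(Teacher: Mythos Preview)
Your proof is correct and follows essentially the same construction as the paper's own proof: the single ``real'' example carries the satisfying assignment on the first $\nsatvars$ bits with all tag bits set to $1$, while the six dummies per clause $\clause$ carry only the tag bit $\nsatvars+\clause$ and have their literal bits chosen to populate the six remaining satisfying leaves of their own clause tree. Your final paragraph, noting that a dummy's first $\nsatvars$ bits are unconstrained in every tree other than its own clause tree (because tag bits alone route it), makes explicit a point the paper leaves implicit; otherwise the two arguments coincide.
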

\begin{proof}
Suppose that there is an assignment of values to the $\nsatvars$ variables of 3-SAT so that all clauses $\clauses$ are satisfied. 

Then for each clause $\clause\in \clauses$, pick the leaf $\node$ of the perfect binary subtree that corresponds to the assignment of variables in 3-SAT. We set $\varyb[\clause,\node,(6|\clauses|+1),]=1$ and  
$\varyb[\clause,\node',(6|\clauses|+1),]=0$ for all $\node'\in \leaves[\clause]$, $\node'\neq \node$. We set the first  $\nsatvars$ attributes of $x_{6|\clauses|+1}$ to match the assignment of the $\nsatvars$ variables that satisfy 3-SAT. We set the attributes $\nsatvars+1,\ldots,\nsatvars+|\clauses|$ to 1. 

For the remaining examples, we set the attributes $\nsatvars+\clause$ of examples $6(\clause-1)+1, \ldots, 6 \clause $ to 1, and all other attributes in $\nsatvars+1,\ldots,\nsatvars+|\clauses|$ to 0. This is done for all $l=1,\ldots,|\clauses|$. In addition, we set the remaining attributes to match one of the leaves that have $\nodesupport[\clause,\node,\class]=1$ but don't correspond to the assignment of variables in 3-SAT and set the corresponding  $\varyb[\clause,\node,\example,]$ to 1.

This can be easily checked to be feasible for (\probname).

For example, in Figure~\ref{fig:3satex}, if the 3-SAT assignment is $u_1=\textsc{True}$, $u_2=u_3=u_4=\textsc{False}$, then we would have the solution to (\probname) shown in Table~\ref{tab:3sat2drp}.

\begin{table}[h!]
\begin{center}
\begin{tabular}{l|llll|lllllllllllllllllllllllllllll}
      & $f_1$ & $f_2$ & $f_3$ & $f_4$ & $f_5$ & $f_6$ & $f_7$ \\ \hline
$x_1$ &  0    &  0    &   0   & {\bf 1} & 1   &  0    &   0   \\
$x_2$ &  0    &  0    &   1   & {\bf 1} & 1   &  0    &   0   \\
$x_3$ &  0    &  1    &   0   & {\bf 1} & 1   &  0    &   0   \\
$x_4$ &  1    &  0    &   1   & {\bf 1} & 1   &  0    &   0   \\
$x_5$ &  1    &  1    &   0   & {\bf 1} & 1   &  0    &   0   \\
$x_6$ &  1    &  1    &   1   & {\bf 1} & 1   &  0    &   0   \\ \hline
$x_7$    &  0    &  0    & {\bf 1} &   1   & 0   &  1    &   0   \\
$x_8$    &  0    &  1    & {\bf 1} &   0   & 0   &  1    &   0   \\
$x_9$    &  0    &  1    & {\bf 1} &   1   & 0   &  1    &   0   \\
$x_{10}$ &  1    &  0    & {\bf 1} &   1   & 0   &  1    &   0   \\
$x_{11}$ &  1    &  1    & {\bf 1} &   0   & 0   &  1    &   0   \\
$x_{12}$ &  1    &  1    & {\bf 1} &   1   & 0   &  1    &   0   \\ \hline
$x_{13}$ & {\bf 1} &  0    &  0    &   1   & 0   &  0    &   1   \\
$x_{14}$ & {\bf 1} &  0    &  1    &   0   & 0   &  0    &   1   \\
$x_{15}$ & {\bf 1} &  0    &  1    &   1   & 0   &  0    &   1   \\
$x_{16}$ & {\bf 1} &  1    &  0    &   0   & 0   &  0    &   1   \\
$x_{17}$ & {\bf 1} &  1    &  0    &   1   & 0   &  0    &   1   \\
$x_{18}$ & {\bf 1} &  1    &  1    &   0   & 0   &  0    &   1   \\ \hline
$x_{19}$ &      \emph{1}  &  \emph{0}    &  \emph{0}    &   \emph{0}   & 1   &  1    &   1
\end{tabular}
\end{center}
\caption{Solution to (\probname) constructed from 3-SAT solution. The entries in \textbf{bold} are arbitrary. The \emph{italicized} entries encode the solution of the corresponding 3-SAT problem.}
\label{tab:3sat2drp}
\end{table}
\end{proof}

\pagebreak
\begin{claim}
If (\probname) is feasible then the 3-SAT instance is a YES instance. 
\label{cl:drp23sat}
\end{claim}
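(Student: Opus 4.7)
My plan is to use the auxiliary tree to pin down a single distinguished example whose first $\nsatvars$ coordinates directly encode a satisfying 3-SAT assignment. By construction, the rightmost leaf of the auxiliary tree, call it $\node^\star$, holds exactly one example (its node count is $1$), and the root-to-leaf path leading to it takes the right branch at every internal node; in order of depth, those internal nodes branch on attributes $\nsatvars+1, \nsatvars+2, \ldots, \nsatvars+|\clauses|$. Given any feasible $(x,z,y)$, the cardinality constraint at $\node^\star$ forces exactly one example, call it $\example^\star$, to be routed there, and the path-consistency constraint in the definition of (\probname) then forces $(x_{\example^\star})_{\nsatvars+\clause}=1$ for every $\clause \in \{1,\ldots,|\clauses|\}$.

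Next, I fix an arbitrary clause tree $\clause$ and trace where $\example^\star$ lands in it. Since $\values = \{1\}$, the example $\example^\star$ sits in exactly one leaf of each tree. Because $(x_{\example^\star})_{\nsatvars+\clause}=1$, $\example^\star$ enters the right subtree of the root of tree $\clause$, which is the perfect binary subtree over the three attributes associated with the literals of clause $\clause$. Hence $\example^\star$ reaches some leaf $\node'$ of that perfect subtree, and the cardinality constraint combined with the fact that $\example^\star$ is routed to $\node'$ forces $\nodesupport[\clause,\node',\class]\geq 1$. Since the unique falsifying leaf of the perfect subtree has node count $0$, $\node'$ must be one of the seven leaves whose assignment to the three variables of clause $\clause$ satisfies it.

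Finally, the first $\nsatvars$ coordinates of $x_{\example^\star}$ define one 0/1 assignment of the $\nsatvars$ 3-SAT variables, and by the previous paragraph this same assignment satisfies every clause simultaneously (the witness in each clause tree refers to the same coordinates of the same vector $x_{\example^\star}$). Hence the 3-SAT instance is a YES instance.

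The ``dummy'' examples and the precise cardinalities on the left leaves of the auxiliary tree were essential for the forward direction (to absorb the remaining examples and prevent over-counting), but they play no role here. I therefore do not anticipate a substantive obstacle: the argument reduces to two routine consequences of the cardinality constraint, namely that a leaf with node count $0$ receives no examples, and that $\example^\star$ is the unique occupant of the rightmost leaf of the auxiliary tree.
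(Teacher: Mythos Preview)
Your argument is correct and follows essentially the same route as the paper's proof: identify the unique example reaching the rightmost leaf of the auxiliary tree, deduce that its attributes $\nsatvars+1,\ldots,\nsatvars+|\clauses|$ are all $1$, and then trace it through each clause tree to conclude it must land in a satisfying leaf, so its first $\nsatvars$ coordinates yield a satisfying assignment. Your write-up is somewhat more explicit about invoking $\values=\{1\}$ and the cardinality constraint to rule out the falsifying leaf, but the logical structure is identical to the paper's.
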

\begin{proof}
If (\probname) is feasible, then there exists one example which has features $\nsatvars+1,\ldots,\nsatvars+|\clauses|$ equal to 1. This comes from the rightmost node of the auxiliary tree. 

Without loss of generality, assume that such example is $x_{6|\clauses| + 1}$ (\emph{e.g.,} $x_{19}$ in Table~\ref{tab:3sat2drp}).

Then in each of the trees $\clause \in \clauses$, $x_{6|\clauses| + 1}$ must have gone to the right branch at the root. In this case, we know that $x_{6|\clauses| + 1}$ must fall into one of the leaves of the perfect binary tree that corresponds to a truth assignment that makes the clause $\clause$ satisfied. 

A solution to 3-SAT can then be constructed by looking at the first $\nsatvars$ components of $x_{6|\clauses| + 1}$.
\end{proof}

\section{Mixed-Integer Linear Programming Formulation}\label{appendix:milp_models}

We show how the reconstruction problem can be alternatively formulated as a Mixed-Integer Linear Program (MILP), permitting the use of alternative solution algorithms. In a MILP, all variables can be continuous or integers, but all constraints and the (optional) objective function must be linear in the decision variables. This restriction is not imposed in Constraint Programming (CP). Consequently, we must \emph{linearize} some of the expressions required to model our reconstruction problem using additional variables. We describe the MILP formulation for the scenario where bagging is not used to train the target random forests, before performing some empirical evaluation of its performance.

\subsection{Model Formulation (Without Bagging)}
\label{subsec:milp_formulation}
\def\alldepth[#1]{\mathcal{D}_{#1}}
\def\nodesatdepth[#1,#2]{\mathcal{V}_{#1#2}^{I}}

\def\fixedfeat[#1]{\mathcal{F}_{#1}}

\def\soly[#1,#2]{x_{#1#2}^{\varyb[,,,]}}

\def\fixnode[#1,#2]{\phi^{\varyb[,,,]}_{#1#2}}

\def\varw[#1,#2,#3]{w_{#1#2#3}}

\def\vareta[#1,#2]{\eta_{#1#2}}

\def\lastidxclset[#1]{\ell_{#1}}

We present here a MILP model for the \probname.
Our MILP model for reconstructing the training set of a given random forest extends the OCEAN framework~\citep{parmentier2021optimal}, which was proposed to generate optimal counterfactual explanations for tree ensembles. 
In a nutshell, OCEAN leverages MILP to encode the structure of the trees within the forest, and aims at finding an example as close as possible from a query example $\attributes[\example]$ but with a different classification. Rather than determining the attributes' vector of a single example (the generated counterfactual), we aim to reconstruct the features' vector of all the $\nexamples$ training examples simultaneously.

We now introduce some additional notation. For each tree $\tree \in \forest$, we define $\alldepth[\tree]$ as the set of all the depths reached in $\tree$, and $\forall \depth \in \alldepth[\tree]$, $\nodesatdepth[\tree,\depth]$ is the set of internal nodes at depth $\depth$ in $\tree$.
As mentioned in Section~\ref{sec:method_implementation}, without bagging, one can fix in advance the set of decisions $\varz[\example,\class]$ (\emph{i.e.,} if an example $\example$ is from class $\class$). 
Let $\classset[\class] = \{ \example \in \{1..\nexamples\}:\varz[\example,\class]=1\}$ be the set of indices of examples belonging to class $\class$, and $\nodesforfeat[\tree, \feature]$ be the set of nodes within tree $\tree$ splitting on feature $\feature$.
W.l.o.g., we assume that the indices in $\classset[\class]$ are consecutive.

We first define decision variables that will model the path of each example through each tree:
\begin{itemize}
    \item $\forall \tree \in \forest, \forall \depth \in \alldepth[\tree], \forall \example \in \{1..\nexamples\}$ : $\varlambda[\tree,\depth,\example] \in \{0,1\}$ takes value $1$ if example $\example$ takes the left path at depth $\depth$ of the tree $\tree$, and $0$ otherwise. The value is free if the path doesn't go this deep.
    \item $\forall \tree \in \forest, \forall \node \in \internalnodes[\tree]\bigcup \leaves[\tree], \forall \example \in \{1..\nexamples\}$: $\vary[\tree,\node,\example] \in [0;1]$ takes value $1$ if example $\example$ reaches node $\node$ of the tree $\tree$, $0$ otherwise (note that the integrality is forced by the previous variables)
    \item $ \forall \example \in \{1..\nexamples\}, \forall \feature \in \{1..\nattributes\}$: $\varx[\example,\feature] \in \{ 0;1 \}$ is the value of feature $\feature$ for example $\example$ in the reconstruction
\end{itemize}

First, the following constraints correspond to the one-hot encoding of the features:
\begin{flalign}
\setstackgap{L}{10pt}
    & \sum\limits_{\feature \in w} \varx[\example,\feature] = 1 & & \forall \example \in \{1..\nexamples\}, \forall \ohegroup \in \ohevects \nonumber
\end{flalign}

We then use the following constraints to model the flow of the examples through the trees:
\begin{flalign}
    & \vary[\tree,1,\example] = 1 & & \forall \tree \in \forest, \forall \example \in \{1..\nexamples\} \\
    & \vary[\tree,\node,\example] = \vary[\tree,\leftchild[\node],\example] + \vary[\tree,\rightchild[\node],\example] & & \forall \tree \in \forest, \forall \node \in \internalnodes[\tree], \forall \example \in \{1..\nexamples\}\\
    & \sum\limits_{\node \in \nodesatdepth[\tree,\depth]} \vary[\tree,\leftchild[\node],\example] \leq \varlambda[\tree,\depth,\example] & & \forall \tree \in \forest, \forall \depth \in \alldepth[\tree], \forall \example \in \{1..\nexamples\}\\
    & \sum\limits_{\node \in \nodesatdepth[\tree,\depth]} \vary[\tree,\rightchild[\node],\example] \leq 1 - \varlambda[\tree,\depth,\example] & & \forall \tree \in \forest, \forall \depth \in \alldepth[\tree], \forall \example \in \{1..\nexamples\}
\end{flalign}

In a nutshell, because we consider the case without the use of bagging, each example has one associated unit of flow at the root of each tree. This flow is encoded by continuous variables (which are easier to handle for the solver than integer/binary ones). All the flow is then directed through the tree, by going either left or right at each split node, until it reaches a leaf. 

We then link these flows to the values taken by the features of the examples through the following constraints:
\begin{flalign}
\setstackgap{L}{10pt}
    & \varx[\example,\feature] \leq 1-\vary[\tree,\leftchild[\node],\example] & & {\forall \example \in \{1..\nexamples\}, \forall \feature \in \{1..\nattributes\}, \forall \tree \in \forest, \forall \node \in \nodesforfeat[\tree,\feature]}\\ 
    & \vary[\tree,\rightchild[\node],\example] \leq \varx[\example,\feature] & &  
    \forall \example \in \{1..\nexamples\}, \forall \feature \in \{1..\nattributes\}, \forall \tree \in \forest, \forall \node \in \nodesforfeat[\tree,\feature]
\end{flalign}

Finally, we connect these flows to the support of each node within the trees (recall that because we consider the case without bagging, $\oheclass[\class,\example]$ is a prefixed constant, and hence the computation is linear in the decision variables $\vary[\tree,\node,\example]$):
\begin{flalign}
\setstackgap{L}{10pt}
    & \nodesupport[\tree,\node,\class] = \sum\limits_{{\example \in \{1..\nexamples\}}} \vary[\tree,\node,\example] \oheclass[\class,\example], & & \forall \tree \in \forest, \forall \node \in \internalnodes[\tree]\bigcup \leaves[\tree], \forall \class \in \classes
\end{flalign}

Note that we additionally use the following constraints for symmetry breaking in each class:
\begin{flalign}
    & \sum\limits_{\feature \in \{1..\nattributes\}} 2^{\feature-1}\varx[\example,\feature] \leq \sum\limits_{\feature \in \{1..\nattributes\}}2^{\feature-1}\varx[(\example+1),\feature] & & \forall \class \in \classes, \forall \example \in \classset[\class] \setminus \{\lastidxclset[\class]\}
\end{flalign}
where $\lastidxclset[\class]$ is the last index in $\classset[\class]$.

In the next subsection, we empirically evaluate our proposed MILP model and compare it to the CP formulation introduced in the main paper. Note that extending the proposed MILP to handle bootstrap sampling is possible, but the number of required variables increases prohibitively in order to preserve linearity, limiting the scalability of the approach.

\subsection{Empirical Evaluation}

We run the reconstruction experiments on the COMPAS dataset without bagging as described in Section~\ref{subsec:expes_setup}, using our MILP formulation, and compare the results with those obtained using our CP model (which are reported in Section~\ref{subsec:results}). 
The MILP models are solved using the \gurobi{} solver~\citep{gurobi} through its Python binding\footnote{\url{https://pypi.org/project/gurobipy/}}, all the other experimental parameters remaining unchanged.

The results are reported in Figure~\ref{fig:results_milp_vs_cp_compas}, and their run times are compared in Table~\ref{tab:runtimes_compas_cp_vs_milp}.
Note that the results for the CP model are those presented in Figure~\ref{fig:results_compas_no_bagging}, repeated here to ease comparison.
Comparing the different curves (which correspond to different maximum depth constraints) between Figures~\ref{fig:results_milp_vs_cp_compas_cp} and~\ref{fig:results_milp_vs_cp_compas_milp}, we see that both approaches successfully solve the dataset reconstruction problem on COMPAS without the use of bagging to train the target random forests. Intuitively, the two feasibility models encode the same information, and define the same set of feasible reconstructions. Because they use different techniques to represent and explore it, they may end up with different reconstructions, but there is no \emph{a priori} reason for one to outperform the other systematically, and as observed in our experiments, their reconstruction performances are generally similar. 

Nevertheless, Table~\ref{tab:runtimes_compas_cp_vs_milp} highlights significant solution-time differences between the CP and MILP approaches. The solution times of both approaches are of the same order of magnitude for shallow trees. However, as the depth of the trees grows, the solution time increases more quickly with the MILP than with the CP model. 
For instance, on average, the MILP formulation requires over three times more CPU time than the CP one when no maximum depth constraint is set. More importantly, the solution times are considerably less stable when using the MILP, resulting in larger maximum run times. In the most extreme case, the MILP exceeds 75 minutes, contrasting sharply with the CP model's consistently modest durations, never surpassing three minutes. As discussed in the previous subsection, the MILP is also less prone to be extended to the setup where bagging is used to train the target random forests. These observations led us to rely on the CP model in the main paper.

\begin{figure*}[]
  \centering
  \begin{subfigure}[t]{0.48\textwidth}
    \centering\includegraphics[width=\textwidth]{compas_cp-sat_bagging=False_average_acc.pdf}
    \caption{CP model}\label{fig:results_milp_vs_cp_compas_cp}
  \end{subfigure}
  \begin{subfigure}[t]{0.48\textwidth}
    \centering\includegraphics[width=\textwidth]{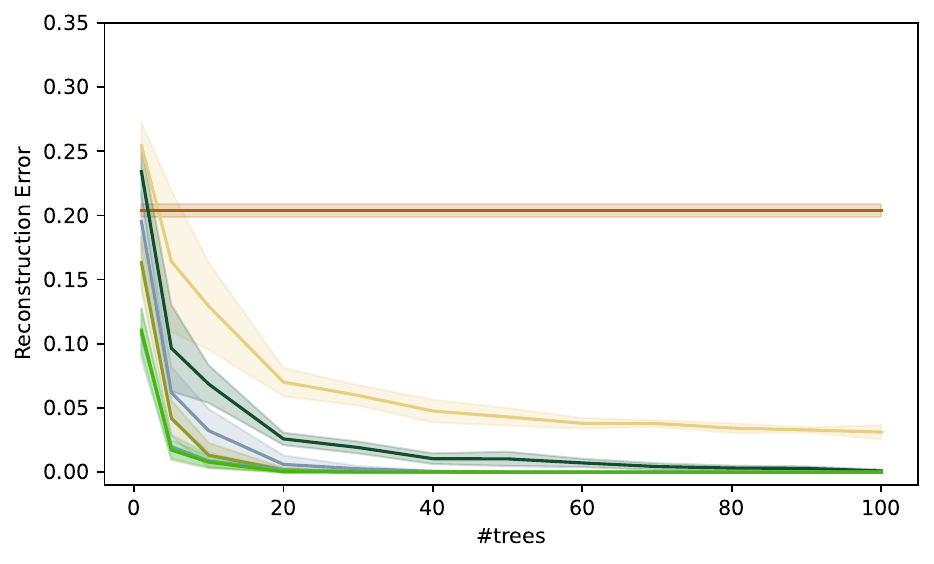}
    \caption{MILP model}\label{fig:results_milp_vs_cp_compas_milp}
  \end{subfigure}
  
  \includegraphics[width=0.75\textwidth]{average_acc_legend.pdf}
    \caption{Average reconstruction error as a function of the number of trees $\lvert \forest \rvert$ within the attacked forest $\forest$, for different maximum depth values $\depth_{max}$ and for the random baseline. For the experiments on the COMPAS dataset, not using bagging, we report the results obtained using either the CP model (Section~\ref{sec:method_implementation}) or the MILP one (Section~\ref{subsec:milp_formulation}).}
    \label{fig:results_milp_vs_cp_compas}
\end{figure*}
\begin{table}[]
\centering
\caption{Reconstruction times for the experiments on the COMPAS dataset, without the use of bagging to train the target random forests. For both the CP and the MILP models, we consider all the measured runtimes (\emph{i.e.,} for the $5$ seeds and the $12$ different numbers of trees within the forests) for a given maximum depth constraint. We report their average value, along with the standard deviation, the minimum time and the maximum one.}\label{tab:runtimes_compas_cp_vs_milp}
\begin{tabular}{@{}cccccc@{}}
\toprule
\multirow{2}{*}{\textbf{Max. Depth}} & \multirow{2}{*}{\textbf{Method}} & \multicolumn{4}{c}{\textbf{Reconstruction Times (s)}}     \\ \cmidrule(l){3-6} 
                                     &                                  & \textbf{Avg} & \textbf{Std} & \textbf{Min} & \textbf{Max} \\ \midrule\midrule
\multirow{2}{*}{2}                   & CP                               & 4.7          & 3.8          & 0.1          & 17.0         \\ \cmidrule(l){2-6} 
                                     & MILP                             & 5.4          & 6.1          & 0.1          & 31.2         \\ \midrule\midrule
\multirow{2}{*}{3}                   & CP                               & 14.1         & 12.9         & 0.1          & 48.4         \\ \cmidrule(l){2-6} 
                                     & MILP                             & 10.3         & 21.9         & 0.2          & 162.5        \\ \midrule\midrule
\multirow{2}{*}{4}                   & CP                               & 25.2         & 18.9         & 0.2          & 58.9         \\ \cmidrule(l){2-6} 
                                     & MILP                             & 24.7         & 52.6         & 0.2          & 302.7        \\ \midrule\midrule
\multirow{2}{*}{5}                   & CP                               & 34.3         & 23.4         & 0.3          & 85.8         \\ \cmidrule(l){2-6} 
                                     & MILP                             & 26.4         & 60.4         & 0.3          & 418.6        \\ \midrule\midrule
\multirow{2}{*}{10}                  & CP                               & 53.7         & 39.2         & 0.5          & 160.6        \\ \cmidrule(l){2-6} 
                                     & MILP                             & 77.6         & 312.7        & 2.6          & 2471.2       \\ \midrule\midrule
\multirow{2}{*}{None}                & CP                               & 49.7         & 35.5         & 0.6          & 142.0        \\ \cmidrule(l){2-6} 
                                     & MILP                             & 188.3        & 791.2        & 3.3          & 4521.8       \\ \bottomrule
\end{tabular}
\end{table}

\section{Implementation Details for the CP Model}
\label{appendix:cpdetails}

As mentioned in Section~\ref{sec:method_implementation}, in order to implement the CP model using the \texttt{OR-Tools} CP-SAT solver, some other variables and constraints are needed due to the specificities of the solution software. We discuss these technical aspects in this appendix. 

First, CP-SAT only allows implication constraints with a literal being the cause of the implication. Therefore, we had to rely on auxiliary binary variables:
\begin{itemize}
\item For all $\tree \in \forest$, $\node \in \leaves[\tree]$, $k\in\{1..\nexamples\}$: $\varw[\tree,\node,\example]$ is 1 if example $\example$ is classified by leaf $\node$ of tree $\tree$; 0 otherwise.
\end{itemize}

Second, the relationship between the $\varq[\tree,\example,\val]$ and $\varyb[\tree,\node,\example,\class]$ also cannot be enforced directly. It can only be done via another set of auxiliary variables:

\begin{itemize}
\item $\vareta[\tree,\example]\in \Z_+$ represents the number of times example $\example$ is used in tree $\tree$
\end{itemize}

To model the relationship between $w$ and $y$, we add the constraints:
\begin{itemize}
\item \textbf{if } $\varw[\tree,\node,\example]=0$ \textbf{ then } $\varyb[\tree,\node,\example,\class]=0$, $\forall \tree \in \forest, \node \in \leaves[\tree], \example \in \{1..\nexamples\}, \class\in \classes$
\item \textbf{if } $\varw[\tree,\node,\example]=1$ \textbf{ then } $\sum\limits_{\class\in \classes} \varyb[\tree,\node,\example,\class]\geq 1$, $\forall \tree \in \forest, \node \in \leaves[\tree], \example \in \{1..\nexamples\}$
\end{itemize}
These are explicitly added in CP-SAT using the \texttt{OnlyEnforceIf} function that allows a linear constraint only to be enforced if a boolean variable is \textsc{True}.

With these variables, the constraints that were presented before as  
 $$\textbf{if } \sum\limits_{\class \in \classes}\varyb[\tree,\node,\example,\class] \geq 1 \textbf{ then} \quad \left(\bigwedge\limits_{\feature \in \positivesplits[\tree,\node]}{\varx[\example,\feature] = 1}\right) \land \left(\bigwedge\limits_{\feature \in \negativesplits[\tree,\node]}{\varx[\example,\feature] = 0}\right)$$ will now be implemented as:

\begin{itemize}
 \item $\forall \tree \in \forest, \forall \example \in \{1..\nexamples\}, \forall \node \in  \leaves[\tree]:$
 $\textbf{if } \varw[\tree,\node,\example]=1  \textbf{ then} \quad \left(\bigwedge\limits_{\feature \in \positivesplits[\tree,\node]}{\varx[\example,\feature] = 1}\right) \land \left(\bigwedge\limits_{\feature \in \negativesplits[\tree,\node]}{\varx[\example,\feature] = 0}\right)$
\end{itemize}

These also can be explicitly added in CP-SAT using the \texttt{OnlyEnforceIf} function.

To model the correct relationship between $\varyb[\tree,\node,\example,\class]$ and $\vareta[\tree,\example]$ variables, we add the constraints:

\begin{itemize}
    \item For all $\example\in\{1..\nexamples\}$, $\tree\in \forest$: $\vareta[\tree,\example]=\sum\limits_{\class\in \classes} \sum\limits_{\node\in \leaves[\tree]} \varyb[\tree,\node,\example,\class]$
\end{itemize}

Now, the constraints
$$\sum\limits_{\node \in \leaves[\tree],\class \in \classes} \varyb[\tree,\node,\example,\class] = \val \iff \varq[\tree,\example,\val]=1$$
can be implemented in CP-SAT using the constraints
\begin{itemize}
    \item For all $\tree\in \forest$, $\example\in \{1..\nexamples\}$: $\texttt{AddMapDomain}(\vareta[\tree,\example],[\varq[\tree,\example,\val]]_{\val\in \values})$
\end{itemize}

These constraints receive the integer variable $\vareta[\tree,\example]$ and the vector of binary variables $[\varq[\tree,\example,\val]]_{\val\in \values}$ and enforce that $\vareta[\tree,\example]=\val$ if and only if $\varq[\tree,\example,\val]=1$.

\section{Extending the CP Model to Handle Non-Binary Attributes}
\label{appendix:other_types_of_attributes}

The Constraint Programming (CP) model presented in Section~\ref{sec:method_implementation} is able to reconstruct binary attributes. While we focused on this case to streamline the presentation of the methodology and evaluation metrics, our framework can be extended to handle other types of attributes, as explained in this appendix section.

Discrete attributes take values in a finite domain. If these values can be ordered, the attribute is coined as \textbf{ordinal}, and if they can not (\emph{i.e.,} if they represent categories), it is called \textbf{categorical}. These two types of discrete attributes can be handled by \draft{} as detailed hereafter. 

\subparagraph{Categorical Attributes.} Because the different possible values of a categorical attribute can not be ordered, it wouldn't make sense to verify whether they are greater or smaller than a given split value, even if the different categories can be represented using different integer values. Indeed, such attributes must usually be one-hot encoded (\emph{i.e.,} with one separate binary attribute for each possible category, all the created binary attributes summing up to one) and are hence directly and efficiently handled using the formulation described in Section~\ref{sec:method_implementation}.

\paragraph{Ordinal Attributes.} Ordinal attributes can be used directly in tree ensembles (without one-hot encoding) since an order relation permits defining meaningful splits. They can be handled naturally using \draft{}. More precisely, using the CP formulation provided in Section~\ref{sec:method_implementation}, the reconstruction variables $\{\varx[\example,\feature]\}_{\example \in \{1..\nexamples\}}$ associated to each ordinal attribute $\feature$ must be declared as integers (which are directly supported in Constraint Programming). Furthermore, the constraint enforcing the conditions associated to a branch leading to a leaf $\node$ if an example $\example$ is assigned to that leaf in tree $\tree$ must be slightly generalized. We now define $\positivesplits[\tree,\node]$ as the set of attribute-value tuples $(\feature, \attrvalue)$ such that attribute $\feature$ must be greater than $\attrvalue$ for an example to fall into leaf $\node$. Similarly, $\negativesplits[\tree,\node]$ is now the set of attribute-value tuples $(\feature, \attrvalue)$ such that attribute $\feature$ must be smaller or equal to $\attrvalue$ for an example to fall into $\node$. Note that this slight generalization also encompasses the binary attribute case, where the split value $\attrvalue$ is usually fixed to $0.5$. The generalized constraint then becomes:

$$\forall \tree \in \forest, \forall \example \in \{1..\nexamples\}, \forall \node \in  \leaves[\tree]:
\textbf{if } \sum\limits_{\class \in \classes}\varyb[\tree,\node,\example,\class] \geq 1 \textbf{ then} \quad \left(\bigwedge\limits_{(\feature, \attrvalue) \in \positivesplits[\tree,\node]}{\varx[\example,\feature] > \attrvalue}\right) \land \left(\bigwedge\limits_{(\feature, \attrvalue) \in \negativesplits[\tree,\node]}{\varx[\example,\feature] \leq \attrvalue}\right)$$

All the other variables and constraints remaining unchanged, the model provided in Section~\ref{sec:method_implementation} can effectively be used to reconstruct discrete (categorical or ordinal) features.

Contrary to discrete attributes, \textbf{numerical} ones take values in a continuous space. If the underlying mathematical programming framework can encode continuous variables (which is, for instance, the case of Mixed-Integer Linear Programming), then numerical attributes can be handled just like ordinal ones, using the methodology described in the previous paragraph. This is not the case in Constraint Programming, but numerical attributes can still be reconstructed effectively, as discussed hereafter.

\paragraph{Numerical Attributes.} While numerical attributes take values in a continuous space, the number of nodes within a decision tree (hence within a random forest) is finite, and so the number of split values regarding any specific attribute is also finite. Then, the number of possible values or intervals for a given reconstructed numerical attribute is also discrete. Indeed, the knowledge acquired from a random forest can indicate that an example's numerical attribute lies within a given interval (between two split values), but in the general case, it does not indicate which particular value within this interval it should take. We leverage such discretization to reconstruct numerical attributes as follows:
\begin{enumerate}
    \item We parse all the trees in the forest and build the \underline{ordered} list of all the different split values regarding each numerical feature $\feature$:
    $$ \splitvalues[\feature] = \texttt{sorted}\left(\left\{\attrvalue : (\feature, \attrvalue) \in (\positivesplits[\tree,\node]\cup \negativesplits[\tree,\node])_{\tree \in \forest, \node \in  \leaves[\tree]}  \right\}\right)$$

    \item We concatenate this ordered list of split values to the (possibly infinite) lower and upper bounds on the domain of attribute $\feature$:
    $$ \intervalvalues[\feature] = \{ \texttt{lower\_bound}(\feature) \} \cup \splitvalues[\feature] \cup \{ \texttt{upper\_bound}(\feature) \}$$
    Intuitively, $\intervalvalues[\feature]$ defines the possible intervals for attribute $\feature$ given the splits within the forest.
    
    \item To build the reconstruction model, we encode each numerical feature $\feature$ as an ordinal (integer) one $\feature'$, taking values in $\{1..(\lvert \splitvalues[\feature] \rvert + 1)\}$. 
    The value of ordinal attribute $\feature'$ in the reconstruction performed by the CP model (variables $\{\varx[\example,\feature']\}_{\example \in \{1..\nexamples\}}$) will then be used to retrieve the interval in which numerical feature $\feature$ lies.
    Note that the first $\lvert \splitvalues[\feature] \rvert$ values correspond to the different split values for attribute $\feature$ while the $(\lvert \splitvalues[\feature] \rvert + 1)$ one encodes the situation where $\feature$ is strictly greater than its largest split value. Note that since the constraints associated to the splits are either ``strictly greater than" or ``smaller or equal to", it is not possible to forbid the smallest split value, and so we do not need to insert an additional value before it. 
    
    \item For each split-value tuple $(\feature, \attrvalue)$ associated to numerical feature $\feature$ in $(\positivesplits[\tree,\node]\cup \negativesplits[\tree,\node])$, we create for the corresponding integer feature $\feature'$ a split-value tuple $(\feature', \attrvalue')$ such that $\attrvalue'$ is the index of $\attrvalue$ in the ordered list $\splitvalues[\feature]$. Using such split-value tuple, attribute $\feature'$ can then be handled just like other ordinal features, as aforementioned.
    
    \item Once the reconstruction is done, we have to connect the value of $\feature'$ to that of the actual (continuous) numerical feature $\feature$. If for $\example \in \{1..\nexamples\}$, $\varx[\example,\feature'] = \attrvalue' \in \{1..(\lvert \splitvalues[\feature] \rvert + 1)\}$ in the reconstruction performed by the CP model, we set the value of the corresponding reconstructed (continuous) attribute to the mean between the $\attrvalue'-1$ and $\attrvalue'$ split values (\emph{i.e.,} to $\frac{\intervalvalues[\feature]{[}\attrvalue'{]}+\intervalvalues[\feature]{[}\attrvalue'+1{]}}{2}$ - the difference in indices comes from the fact that $\intervalvalues[\feature]$ starts with an additional element, corresponding to the attribute's lower bound). Note that if the lower and (or) upper bounds of $\feature$ are infinite, we can choose any arbitrary value compatible with the splits' information.

\end{enumerate}

\section{The Impact of Bagging on Data Protection}
\label{appendix:benchmark}

Our results (reported in Section~\ref{sec:experiments}) show that if bagging is not used, then all of the data can be recovered with just a few trees in the RF. 
However, with bagging, the CP model from  Section~\ref{methodology} can recover around 90-95\% of the data, even with many trees. 
In this appendix, we present experiments designed to understand why we could not recover 100\% of the data with bagging. 

One of the complicating aspects of bagging is that the knowledge of how many times a sample has been classified within a given leaf $\node$ of a given tree $\tree$ is lost.
With this in mind, we posed the following question: 
\begin{itemize}
\item Considering the CP model from Section~\ref{methodology}, if we know in advance the values 
of $\varyb[\tree,\node,\example,\class]$ (that is, how many times sample $\example$ is classified within a given leaf $\node$ of tree $\tree$ as part of class $\class$) 
how much reduction can be observed in the reconstruction error?
\end{itemize}

Note that, if the values of $\varyb[\tree,\node,\example,\class]$ are given, then the values of $\varz[\example,\class]$ and $\varq[\tree,\example,\val]$ can be deducted.
So the only remaining issue is to determine the
$\varx[\example,\feature]$ values and the only constraints that need to be enforced on those are the one-hot encoding constraints and the \emph{leaf-consistency} constraints:
\begin{itemize}
 \item $\forall \tree \in \forest, \forall \example \in \{1..\nexamples\}, \forall \node \in  \leaves[\tree]:$
 $\textbf{if } \sum\limits_{\class \in \classes}\varyb[\tree,\node,\example,\class] \geq 1 \textbf{ then} \quad \left(\bigwedge\limits_{\feature \in \positivesplits[\tree,\node]}{\varx[\example,\feature] = 1}\right) \land \left(\bigwedge\limits_{\feature \in \negativesplits[\tree,\node]}{\varx[\example,\feature] = 0}\right)$
\end{itemize}

Let $\fixnode[\tree,\example]:=\left\{ \node \in  \leaves[\tree] : \sum\limits_{\class \in \classes}\varyb[\tree,\node,\example,\class] \geq 1\right\}$  be the (possibly empty) set of leaves of tree $\tree$ for which example $\example$ has been used. While the leaf-consistency constraints fix all attributes $\feature$ in $\positivesplits[\tree,\node]\cup \negativesplits[\tree,\node]$ for $v\in \fixnode[\tree,\example]$, any feature that does not appear in any such sets (call them \emph{free attributes}) can be arbitrarily set without changing the likelihood of the solution. And so, the fact that a free attribute is guessed correctly can be attributed to luck and should not be seen as a positive aspect of the CP model. 

Formally, the fixed attributes for example $\example\in \{1..\nexamples\}$ are 
$$\fixedfeat[\example]:=\bigcup\limits_{\tree \in \forest} \bigcup\limits_{\node \in \fixnode[\tree,\example]} \left(\positivesplits[\tree,\node]\cup\negativesplits[\tree,\node]\right)$$
and the free attributes are $\bar{\fixedfeat[\example]}:=\{1..\nattributes\} \setminus \fixedfeat[\example].$

Let ${\{\attributes[\example];\class_{\example}\}}^{\nexamples}_{\example=1}$ be the training set which was used to train the random forest (and which we are trying to recover). 
With this we define $\soly[\example,]$ as follows:
$$\soly[\example,\feature]:=
\begin{cases}
1, & \mbox{if } i\in \fixedfeat[\example]\cap \positivesplits[\tree,\node] \mbox{ for some } \tree \in \forest, v\in \fixnode[\tree,\example] \\
0, & \mbox{if } i\in \fixedfeat[\example]\cap \negativesplits[\tree,\node] \mbox{ for some } \tree \in \forest, v\in \fixnode[\tree,\example] \\
1 - \attributes[\example\feature], & \mbox{ otherwise}
\end{cases}
$$
$\soly[\example,]$ can be thought of as the solution that is consistent with the $\varyb[,,,]$ variables on all fixed attributes and incorrectly guesses the values of all free attributes, so the worst possible solution that is consistent with $\varyb[,,,]$.

Our \emph{benchmark} experiment can now be described as follows:
\begin{itemize}
\item Run the CP model of Section~\ref{methodology} with $\varx[\example,\feature] = \attributes[\example\feature]$ and $\varz[\example,\class_{\example}]=1$ for all $\example \in\{1..\nexamples\}, \feature\in\{1..\nattributes\}$.
\item Obtain from the solution of such model the values of the $\varyb[\tree,\node,\example,\class]$ variables, for all $\tree\in \forest$, $\node \in \leaves[\tree]$, $\example\in\{1..\nexamples\}$, $\class\in \classes$.
\item Output the set of solutions $\{\soly[\example,]\}_{k=1}^{\nexamples}$.
\end{itemize}
Intuitively, we get the best possible guess for the $\varyb[\tree,\node,\example,\class]$ variables by solving the maximum likelihood problem when the training set is given. Subsequently, we get the worst possible solution that is consistent with that guess. It is worth noting that the knowledge of the training set is used in an advantageous way only to obtain the best possible guess for the $\varyb[\tree,\node,\example,\class]$ variables.

The results of the benchmark experiments for the three considered datasets are shown in Figures~\ref{fig:results_bench_compas}, \ref{fig:results_bench_adult} and~\ref{fig:results_bench_default_credit}. 
The results without bagging are also repeated in Figures~\ref{fig:results_nobag_compas_appendix}, \ref{fig:results_nobag_adult_appendix} and~\ref{fig:results_nobag_default_credit_appendix} (from Figures~\ref{fig:results_compas_no_bagging}, \ref{fig:results_adult_no_bagging} and~\ref{fig:results_default_credit_no_bagging}) for reference and easy comparison. 

The results show that, if one can correctly guess the $\varyb[\tree,\node,\example,\class]$ variables, one can get much closer to recovering 100\% of the data, as in the situation without bagging. Accordingly, the key difficulty in recovering the data is guessing which examples were used in each tree. This corroborates the fact that bagging can help prevent data reconstruction. It also answers the question posed at the beginning of this section. Note that bagging was theoretically shown to intrinsically provide some differential privacy guarantees~\citep{DBLP:conf/ijcai/LiuJG21}, which is consistent with our findings.

It is also interesting to note that the number of trees needed to recover the data without bagging seems to be lower than in the benchmark runs, except for very shallow trees. This makes sense since, without bagging, every tree $\tree$ provides some information about every example $\example$ via the sets $\fixnode[\tree,\example]$, while this is not true with bagging.

One can observe another surprising trend when comparing the curves corresponding to shallow trees (\emph{e.g.,} maximum depth of $2$). Indeed, without bagging, the reconstruction error decreases until a certain value and remains more or less constant, even when increasing the number of trees further. This does not happen in the benchmark runs, and even with very shallow trees, the reconstruction error (which in this experiment is the worst we can expect) converges close to $0$. In fact, a large number of trees trained with bagging seems to provide more information (with the knowledge of the values 
of $\varyb[\tree,\node,\example,\class]$) than the same number of trees trained without bagging. An explanation for this behavior could lie in the trees' intrinsic diversity and in the fact that each of them contains more information about some training samples, namely those that appeared several times in their training data.

\begin{figure}[h!]
  \centering
  \begin{subfigure}[t]{0.48\textwidth}
    \centering\includegraphics[width=\textwidth]{compas_cp-sat_bagging=False_average_acc.pdf}
    \caption{COMPAS dataset, bagging not used}\label{fig:results_nobag_compas_appendix}
  \end{subfigure}  
    \begin{subfigure}[t]{0.48\textwidth}
    \centering\includegraphics[width=\textwidth]{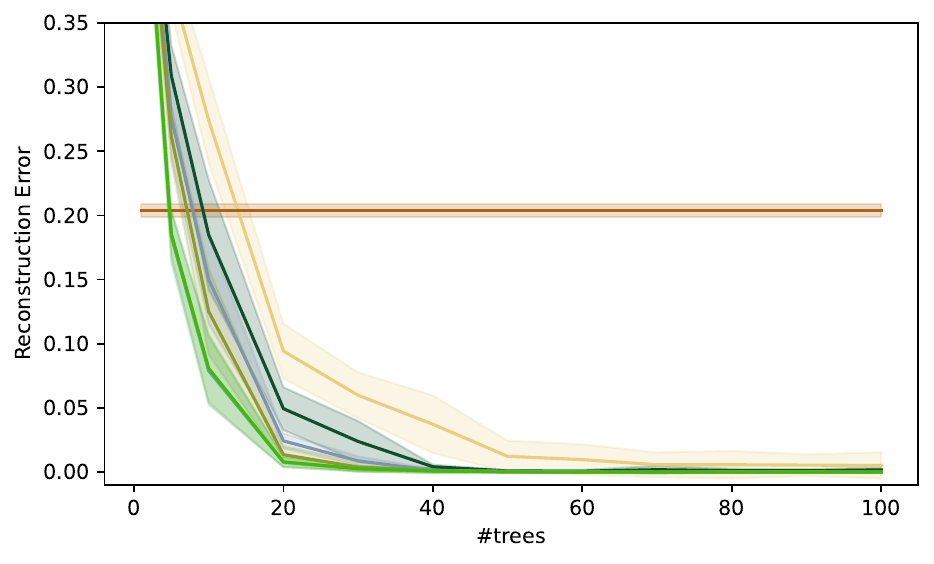}
    \caption{COMPAS dataset, benchmark runs} 
    \label{fig:results_bench_compas}
  \end{subfigure}

  \begin{subfigure}[t]{0.48\textwidth}
    \centering\includegraphics[width=\textwidth]{adult_cp-sat_bagging=False_average_acc.pdf}
    \caption{UCI Adult Income dataset, bagging not used}\label{fig:results_nobag_adult_appendix}
  \end{subfigure}  
      \begin{subfigure}[t]{0.48\textwidth}
    \centering\includegraphics[width=\textwidth]{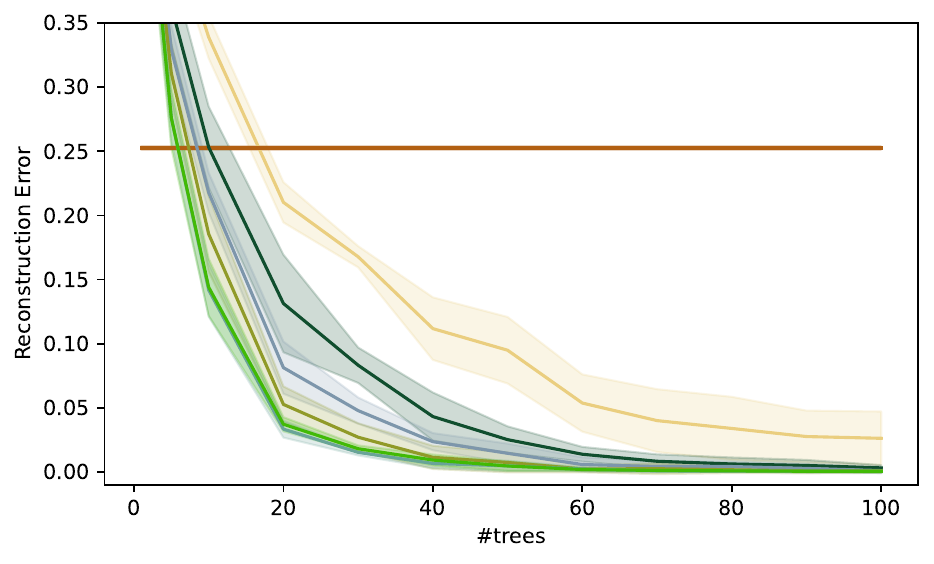}
    \caption{UCI Adult Income dataset, benchmark runs}\label{fig:results_bench_adult}
  \end{subfigure}

      \begin{subfigure}[t]{0.48\textwidth}
    \centering\includegraphics[width=\textwidth]{default_credit_cp-sat_bagging=False_average_acc.pdf}
    \caption{Default of Credit Card Client dataset, bagging not used}\label{fig:results_nobag_default_credit_appendix}
  \end{subfigure}  
    \begin{subfigure}[t]{0.48\textwidth}
    \centering\includegraphics[width=\textwidth]{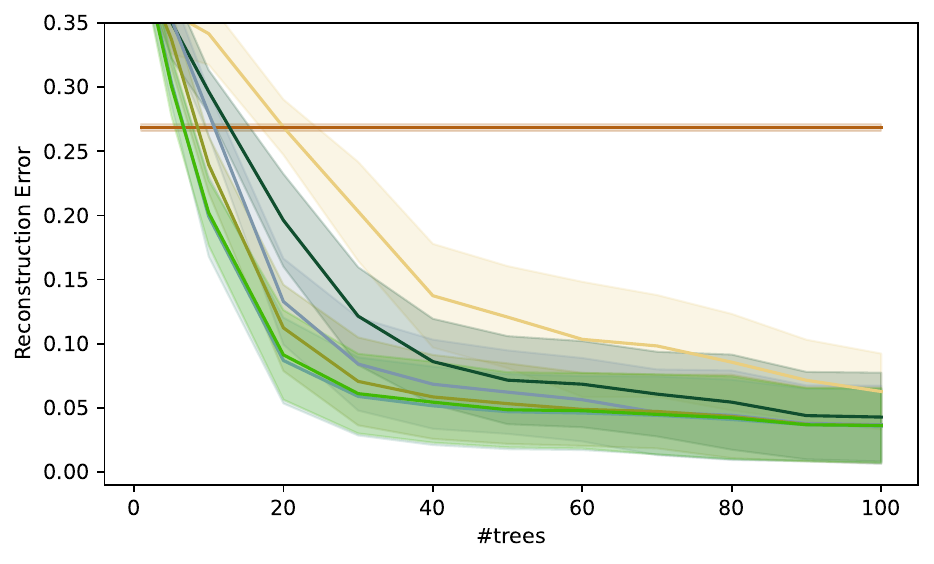}
    \caption{Default of Credit Card Client dataset, benchmark runs}\label{fig:results_bench_default_credit}
  \end{subfigure}

  \includegraphics[width=0.75\textwidth]{average_acc_legend.pdf}
\caption{Comparison of the benchmark results  (using bagging, worst possible reconstruction error using our set of constraints if the number of occurrences of each example within each tree are known) with the ``no-bagging'' ones}\label{fig:results_bench_all}
\end{figure}

\section{Additional Experiments on Scalability}
\label{appendix:scalability_reconstr}

This appendix section aims to investigate how our reconstruction attack performs in terms of both reconstruction error and time when the size $\nexamples$ of the reconstructed training set varies. We additionally investigate whether or not generalization error affects our reconstruction process. To this end, we re-run our experiments without the use of bagging, for the three datasets and \sklearn{}'s default configuration (\emph{i.e.,} with a fixed number of trees $\lvert \forest \rvert = 100$ and no fixed maximum depth). The setup is as described in Section~\ref{subsec:expes_setup}, but we vary the size of the sub-sampled training set between $25$ and $1{,}500$ examples and set the reconstruction time limit to $6$ hours.
The results are provided within Tables~\ref{tab:scalability_compas}, \ref{tab:scalability_adult}, and~\ref{tab:scalability_default_credit} for the three datasets. More precisely, we report for each training set size, the performances (train and test accuracy) of the trained random forests (averaged over the $5$ different random samplings), the reconstruction error, as well as the minimum, maximum, average, and standard deviation of the reconstruction times. As can be seen in the tables, for the UCI Adult Income (respectively, the Default of Credit Card Client) dataset, we report no result for $\nexamples>750$ (respectively, for $\nexamples>500$). This is due to a technical limitation in the solver we use in our experiments. More precisely, the Python wrapper of \ortools{} is limited in the amount of data it can send to its C++ core. This means that large CP models (above 2GB) can not be solved using this wrapper\footnote{This limitation is discussed on the solver's repository: \url{https://github.com/google/or-tools/issues/3861}.}.

\paragraph{Scalability.} We observe in Tables~\ref{tab:scalability_compas}, \ref{tab:scalability_adult}, and~\ref{tab:scalability_default_credit} that the reconstruction error remains very small ($0$ or very close to $0$) for all the considered values of $N$, i.e., the success of our attack is not affected by the size of the reconstructed training set. However, reconstruction time consistently increases with $\nexamples$. This can be explained by the fact that reconstructing more training examples requires exploring a considerably larger search space. Indeed, increasing $\nexamples$ leads to a linear increase in the number of $\{\varx[\example,\feature]\}_{\example \in \{1..\nexamples\},~\feature \in \{1..\nattributes\}}$ and $\{\varyb[\tree,\node,\example,]\}_{\tree \in \forest,~\node \in \leaves[\tree],~\example \in \{1..\nexamples\}}$ variables, but to an exponential increase of the number of possible solutions of the constraint programming model (\emph{i.e.,} search space size). Fortunately, the solution process of the CP solver, using domain reduction and other strategies, does not require examining all the solutions. Empirically, the growth of reconstruction time as a function of  $\nexamples$ is not exponential but polynomial ---approximately quadratic on the considered datasets according to a power-law regression. Another interesting side effect is that larger training sets often lead to deeper trees: while small datasets can be separated using shallow trees, larger ones often require more splits to be performed. This leads to an increase in the number of $\{\varyb[\tree,\node,\example,]\}_{\tree \in \forest,~\node \in \leaves[\tree],~\example \in \{1..\nexamples\}}$ variables, again increasing the size of the search space. However, this also provides more information regarding the values of the attributes of the reconstructed examples, partly explaining why the reconstruction error remains small, even if the number of possible reconstructions increases significantly.

\paragraph{Generalization Error.} Another interesting observation is that the generalization error does not affect our reconstruction approach. This was expected: what matters for reconstruction is the information provided by the forest regarding the training data (\emph{i.e.}, the number of samples passing through each branch fulfilling certain split conditions). Even a badly performing random forest can lead to accurate reconstructions if it encodes enough diverse information regarding its training data within the trees (regardless of unseen test data). 
This was already visible in all our experiments, where no correlation could be drawn between a random forest's generalization error and the success of our reconstruction attack.
More precisely, we investigated for a possible correlation between the reconstruction error and the generalization error (or train error or test error), but none of these analyses led to any visible trend. Finally, as it stands, having good or bad generalization capabilities does not appear to be a prerequisite for the success of our reconstruction approach.

\begin{table}[]
    \centering
   \begin{tabular}{@{}cccccccc@{}}
\toprule
\multirow{2}{*}{\textbf{\#Examples} $\nexamples$} & \multicolumn{2}{c}{\textbf{RF Accuracy}} & \textbf{Reconstruction error} & \multicolumn{4}{c}{\textbf{Reconstruction Time (s)}}      \\ \cline{2-8} 
                                     & \textbf{Train}      & \textbf{Test}      & \textbf{Avg}                  & \textbf{Avg} & \textbf{Std} & \textbf{Min} & \textbf{Max} \\ \hline
25                                   & 0.896               & 0.559              & 0.0                           & 5.8          & 1.1          & 4.1          & 7.4          \\ \midrule
50                                   & 0.860               & 0.556              & 0.0                           & 30.5         & 5.8          & 26.6         & 42.0         \\ \midrule
100                                  & 0.800               & 0.582              & 0.0                           & 84.1         & 12.0         & 65.8         & 99.1         \\ \midrule
200                                  & 0.770               & 0.617              & 0.0                           & 260.9        & 28.6         & 231.2        & 300.2        \\ \midrule
300                                  & 0.759               & 0.629              & 0.0                           & 467.9        & 80.2         & 386.1        & 621.9        \\ \midrule
400                                  & 0.741               & 0.632              & 0.0                           & 699.9        & 52.5         & 602.3        & 754.1        \\ \midrule
500                                  & 0.734               & 0.639              & 0.0                           & 1071.9       & 197.0        & 883.9        & 1448.7       \\ \midrule
750                                  & 0.725               & 0.643              & 0.0                           & 2219.4       & 428.4        & 1711.9       & 2818.9       \\ \midrule
1000                                 & 0.714               & 0.642              & 0.0                           & 3678.8       & 231.0        & 3300.8       & 4005.0       \\ \midrule
1500                                 & 0.704               & 0.650              & 0.0                           & 7362.2       & 1097.0       & 6485.9       & 9519.5       \\ \bottomrule
\end{tabular}
    \caption{Experiments (non-bagging case) on the reconstruction method's scalability, COMPAS dataset. Applying a simple power law regression, we observe that running times are in $\Theta(\nexamples^{1.7})$.}
    \label{tab:scalability_compas}
\end{table}

\begin{table}[]
    \centering
\begin{tabular}{@{}cccccccc@{}}
\toprule
\multirow{2}{*}{\textbf{\#Examples} $\nexamples$} & \multicolumn{2}{c}{\textbf{RF Accuracy}} & \textbf{Reconstruction error} & \multicolumn{4}{c}{\textbf{Reconstruction Time (s)}}      \\ \cmidrule(l){2-8} 
                                     & \textbf{Train}      & \textbf{Test}      & \textbf{Avg}                  & \textbf{Avg} & \textbf{Std} & \textbf{Min} & \textbf{Max} \\ \midrule
25                                   & 0.952               & 0.709              & 0.0                           & 9.1          & 2.9          & 6.2          & 14.6         \\ \midrule
50                                   & 0.964               & 0.748              & 0.0                           & 37.0         & 4.4          & 29.2         & 42.8         \\ \midrule
100                                  & 0.964               & 0.770              & 0.0                           & 188.2        & 58.0         & 117.4        & 278.3        \\ \midrule
200                                  & 0.949               & 0.767              & 0.0                           & 631.7        & 110.8        & 513.2        & 838.6        \\ \midrule
300                                  & 0.935               & 0.771              & 0.1                           & 4860.4       & 2457.7       & 1430.2       & 7695.8       \\ \midrule
400                                  & 0.925               & 0.778              & 0.1                           & 6119.3       & 2365.6       & 2304.4       & 8054.8       \\ \midrule
500                                  & 0.913               & 0.779              & 0.1                           & 6523.6       & 1558.4       & 4695.4       & 8429.2       \\ \midrule
750                                  & 0.905               & 0.780              & 0.0                           & 13911.5      & 5146.7       & 8764.9       & 19058.2      \\ \bottomrule
\end{tabular}
    \caption{Experiments (non-bagging case) on the reconstruction method's scalability, UCI Adult Income dataset. Applying a simple power law regression, we observe that running times are in $\Theta(\nexamples^{1.4})$.}
    \label{tab:scalability_adult}
\end{table}

\begin{table}[]
    \centering
    \begin{tabular}{@{}cccccccc@{}}
\toprule
\multirow{2}{*}{\textbf{\#Examples} $\nexamples$} & \multicolumn{2}{c}{\textbf{RF Accuracy}} & \textbf{Reconstruction error} & \multicolumn{4}{c}{\textbf{Reconstruction Time (s)}}      \\ \cmidrule(l){2-8} 
                                     & \textbf{Train}      & \textbf{Test}      & \textbf{Avg}                  & \textbf{Avg} & \textbf{Std} & \textbf{Min} & \textbf{Max} \\ \midrule
25                                   & 0.968               & 0.733              & 0.0                           & 6.7          & 1.5          & 4.0          & 8.1          \\ \midrule
50                                   & 0.976               & 0.723              & 0.0                           & 42.1         & 3.5          & 37.8         & 47.3         \\ \midrule
100                                  & 0.972               & 0.740              & 0.0                           & 148.3        & 16.6         & 127.6        & 169.4        \\ \midrule
200                                  & 0.965               & 0.751              & 0.0                           & 905.6        & 528.1        & 584.1        & 1958.1       \\ \midrule
300                                  & 0.957               & 0.763              & 0.0                           & 2369.8       & 1167.5       & 1316.7       & 4524.4       \\ \midrule
400                                  & 0.952               & 0.760              & 0.0                           & 2733.2       & 581.5        & 2065.6       & 3706.5       \\ \midrule
500                                  & 0.947               & 0.765              & 0.0                           & 6119.9       & 1880.1       & 3902.8       & 8671.4       \\ \bottomrule
\end{tabular}
    \caption{Experiments (non-bagging case) on the reconstruction method's scalability, Default of Credit Card Client dataset. Applying a simple power law regression, we observe that running times are in $\Theta(\nexamples^{2.4})$.}
    \label{tab:scalability_default_credit}
\end{table}

\section{Additional Experiments on Partial Reconstruction}\label{appendix:partial_reconstr}

In this appendix section, we perform complementary experiments on partial dataset reconstruction. More precisely, we consider the scenario where part of the training set attributes are known (for each training example). 
This scenario corresponds to the case where some of the attributes are publicly known, and the adversary's objective is only to retrieve the unknown (private) ones. As discussed in Section~\ref{sec:rel_works}, this setup corresponds to most of the reconstruction attacks found in the literature, where many works only attempt to reconstruct a single private attribute with knowledge of all the remaining ones~\citep{DBLP:conf/pods/DinurN03,doi:10.1146/annurev-statistics-060116-054123}.

For each of the three datasets considered in our experiments (introduced in Section~\ref{subsec:expes_setup}), we vary the number of known attributes between 0 and $\nattributes - 1$. The former case corresponds to the setup studied in Section~\ref{sec:experiments} (in which the adversary reconstructs the whole dataset), while in the latter case, only one attribute is unknown.
Between these two situations, our objective is also to characterize whether the knowledge of a number of attributes helps reconstruct the others, and to what extent.
Note that because binary attributes that are a one-hot encoding of the same original feature are not independent from each other, knowledge of one of them can fix the value of the others, which could bias the reconstruction results. For this reason, we consider each set of binary attributes that one-hot encode the same original feature as a single one. Then, the COMPAS dataset has $7$ such original features, the UCI Adult Income dataset has $14$, and the Default of Credit Card Client dataset has $16$. For each $\nattributes' \in \{0..\nattributes - \sum_{\ohegroup \in \ohevects}{(\lvert \ohegroup \rvert -1})\}$, we randomly pick $\nattributes'$ original attributes (\emph{i.e.,} either a binary attribute or a group of binary attributes one-hot encoding the same feature) that we assume are known. For such known attributes, their values for all the training set examples are fixed in the CP model introduced in Section~\ref{sec:method_implementation}. In other words, for each known attribute $\feature$, we assign the corresponding variables $\varx[\example,\feature]$ ($\forall \example \in \{1..\nexamples\}$) to their true value. 
The solver's task is then to find the value of the other attributes only.

To evaluate the proposed reconstruction, we first perform the examples' matching (with the actual training set) as described in Section~\ref{subsec:expes_setup}, using all the attributes. Then, the resulting reconstruction error is measured only on the unknown attributes. Note that performing the matching only using the unknown attributes would (artificially) result in lower reconstruction error rates, but would not make sense, as it would only evaluate whether the correct values for the unknown features are found (and not whether they are assigned to the correct example as indicated by the known attributes).
For these experiments, we focus on \sklearn{}'s default configuration (\emph{i.e.,} $\lvert \forest \rvert = 100$ trees and no maximum depth constraint). Moreover, we restrict our attention to the general case where bagging is used, as the (simpler) case without bagging is already successfully handled even without knowledge of any attribute. The experimental parameters are as described in Section~\ref{subsec:expes_setup}, and in particular, each run is averaged over five different random seeds. Finally, as already observed in Section~\ref{sec:experiments}, in a few experiments, the solver does not find any feasible reconstruction within the given time frame. 
Thus, we removed the experiments for which less than three runs were completed. This occurs in two cases, \emph{i.e.,} on the Default of Credit Card Client dataset, when the number of fixed attributes is at most $2$.

The reconstruction error (measured on the unknown attributes as aforementioned) is reported in Figure~\ref{fig:results_partial_reconstr_attrs} for all three datasets.
The results consistently show that knowledge of some attributes helps reconstructing the others. Moreover, the more attributes are known, the lower the error on the remaining (unknown) ones.
This suggests that considering the scenarios commonly used in the reconstruction literature only improves the results of our attack, as it successfully leverages knowledge of part of the dataset attributes.

It is worth noting that we also performed partial reconstruction experiments (not reported here) in which part of the training set examples (rather than attributes) are known by the attacker. Interestingly, we observed a different trend as knowledge of some examples did not really improve the reconstruction error for the others. A possible explanation for that (related to our findings of Section~\ref{appendix:benchmark}) lies in the Differential Privacy (DP) protection intrinsically offered by bagging~\citep{DBLP:conf/ijcai/LiuJG21}. Indeed, DP ensures that the trained forest does not depend too strongly on any single example, hence protecting each individual row within the training set. On the contrary, it does not directly protect the training set columns.

\begin{figure}[h!]
  \centering
    \begin{subfigure}[t]{0.48\textwidth}
    \centering\includegraphics[width=\textwidth]{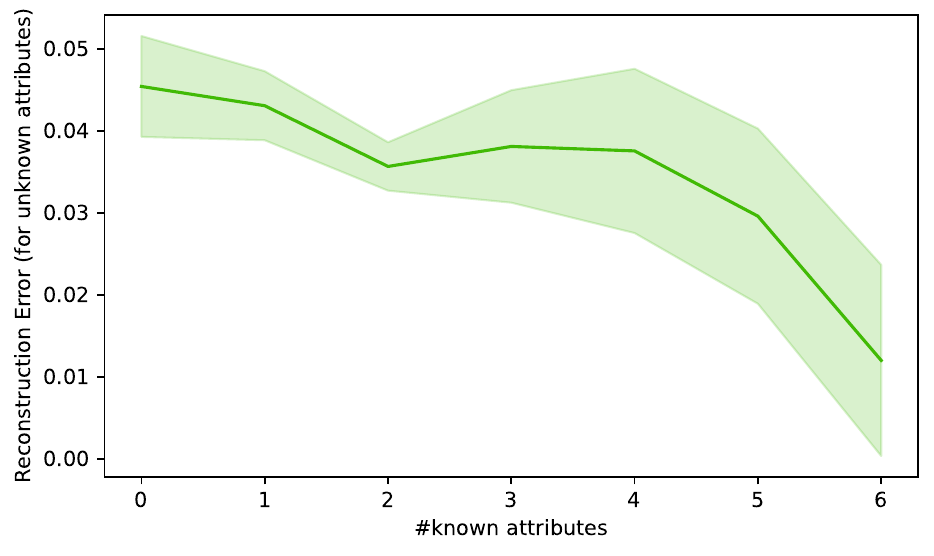}
    \caption{COMPAS dataset} 
    \label{fig:results_partial_reconstr_attrs_compas}
  \end{subfigure}
      \begin{subfigure}[t]{0.48\textwidth}
    \centering\includegraphics[width=\textwidth]{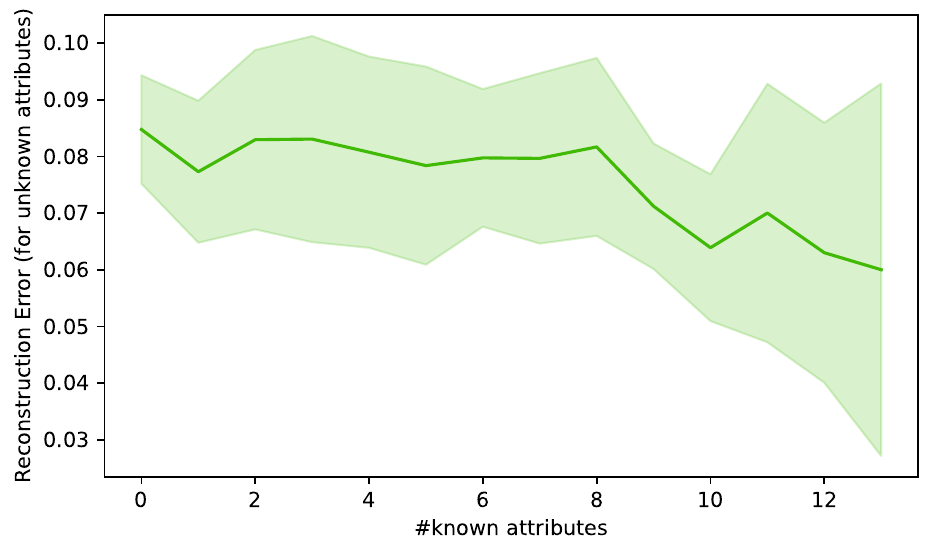}
    \caption{UCI Adult Income dataset}\label{fig:results_partial_reconstr_attrs_adult}
  \end{subfigure}

    \vspace{10pt}
  
    \begin{subfigure}[t]{0.48\textwidth}
    \centering\includegraphics[width=\textwidth]{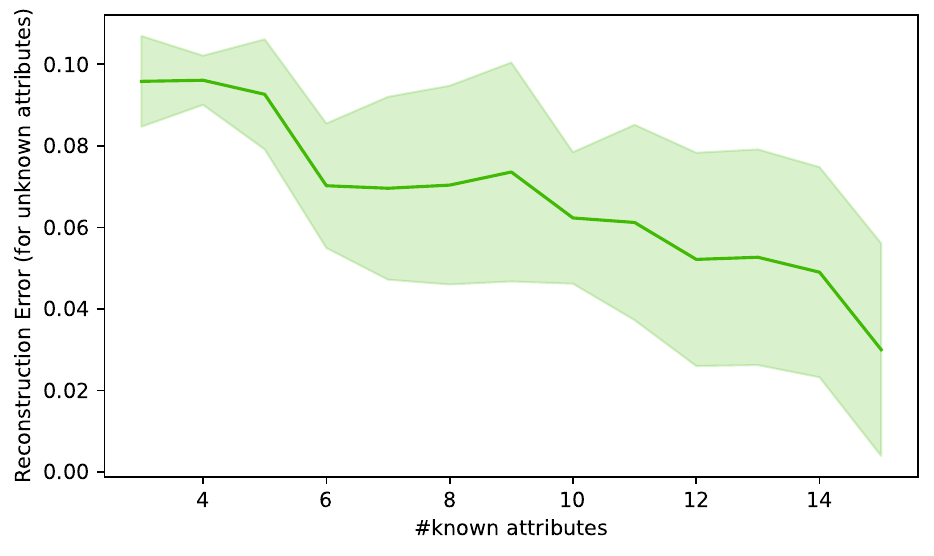}
    \caption{Default of Credit Card Client dataset}\label{fig:results_partial_reconstr_attrs_default_credit}
  \end{subfigure}

\caption{Results of reconstruction experiments  with knowledge of some of the attributes. We report the reconstruction error (for the unknown attributes) as a function of the number of known attributes in the forest's training set. For these experiments, all forests are learnt using \sklearn{}'s default configuration (\emph{i.e.,} $\lvert \forest \rvert = 100$ and no maximum depth constraint). Reconstruction errors are averaged over 5 different random seeds and we also report the standard deviation.}\label{fig:results_partial_reconstr_attrs}
\end{figure}

\end{document}